\documentclass[letterpaper]{article} %
\usepackage{aaai2027}  %
\usepackage[hyphens]{url}  %
\usepackage{graphicx} %
\usepackage{natbib}  %
\usepackage{caption} %
\usepackage{algorithm}
\usepackage{algorithmic}

\newcommand{\Q}{\mathbb{Q}}

\newcommand{\R}{\mathbb{R}}

\usepackage{amssymb}
\usepackage{amsmath}

\usepackage{amsthm}
\usepackage{xcolor}
\usepackage{stmaryrd}

\newtheorem{theorem}{Theorem}
\newtheorem{lemma}{Lemma}
\newtheorem{proposition}{Proposition}
\newtheorem{definition}{Definition}

\newcommand{\DS}{\mathsf{DS}}

\newcommand{\Inf}{\mathit{Inf}}
\newcommand{\Act}{\mathit{Act}}

\providecommand{\R}{\mathbb{R}}
\providecommand{\Q}{\mathbb{Q}}

\providecommand{\DS}{\mathsf{DS}}

\providecommand{\Buchi}{\textsf{B\"uchi}}

\usepackage{booktabs}

\usepackage{tikz} 
\usetikzlibrary{arrows.meta}

\usepackage{newfloat}
\usepackage{listings}
\DeclareCaptionStyle{ruled}{labelfont=normalfont,labelsep=colon,strut=off} %
\floatstyle{ruled}
\newfloat{listing}{tb}{lst}{}
\floatname{listing}{Listing}

\usepackage{booktabs}

\title{Categorizer Automata for Discounted-Sum Payoffs}
\author{
    Nathalie Bertrand\textsuperscript{\rm 1},
    Pranav Ghorpade\textsuperscript{\rm 2},
    Senthil Rajasekaran\textsuperscript{\rm 3},
    Sasha Rubin\textsuperscript{\rm 2},
    Moshe Vardi\textsuperscript{\rm 4}
}
\affiliations{
    \textsuperscript{\rm 1}Univ Rennes, Inria, CNRS, IRISA, France\\
    \textsuperscript{\rm 2}University of Sydney, Australia\\
    \textsuperscript{\rm 3}Université Libre de Bruxelles,  Belgium\\
    \textsuperscript{\rm 4} Rice University, USA\\
}

\begin{document}

\maketitle

\begin{abstract}
Categorizing continuous data into discrete bins is a fundamental operation in artificial intelligence. We introduce the \emph{categorizer automaton}, a deterministic automaton that reads an infinite sequence of rewards and identifies which of finitely many bins contains its discounted sum. Categorizer automata generalize comparator automata, the special case of two bins, which have already proven useful in quantitative synthesis. Our main technical contribution is the construction of a categorizer automaton whose state space is linear in the number of bins, rather than exponential as obtained by a cross-product of comparator automata.  We then apply categorizer automata to Markov decision processes, where they allow one to synthesize policies that maximize the expected utility of a discounted-sum payoff for utility functions that may be \emph{discontinuous}. For piecewise-constant utility functions, the resulting algorithm is exact and runs in pseudo-polynomial time. For piecewise-Lipschitz utility functions, a class that includes any utility with bounded slope between finitely many jumps, it again runs in pseudo-polynomial time and yields an $\varepsilon$-optimal policy. We also show that the synthesis problem considered is \textsf{PSPACE}-hard already for piecewise-constant utilities.

\end{abstract}

\section{Introduction}

The fields of verification and sequential decision-making often involve assigning a numerical payoff to an infinite-length execution of a system. One of the most popular ways to reason about such executions is through the discounted-sum payoff~\cite{DBLP:conf/icalp/AlfaroHM03,Puterman94}, which aggregates an infinite sequence of rewards along an execution through a discounted sum. For a discount factor $d>1$, the discounted-sum payoff of an infinite reward sequence $r_0 r_1 \ldots$ is
\(\DS_d(r_0 r_1 \ldots) := \sum_{i=0}^{\infty} d^{-i} r_i\).

Representing numerical properties of discounted-sum payoffs with finite-state automata allows synthesis and verification problems involving such properties to be handled using efficient automata-theoretic techniques.
A notable example is the development of the \emph{comparator automaton}~\cite{DBLP:conf/fossacs/BansalCV18,bansal2022comparator}. A comparator automaton reads an infinite reward sequence $w$ and accepts it iff
$\DS_d(w) \bowtie t$, for a fixed rational threshold $t$ and a comparison relation $\bowtie$ such as $>$ or $\leq$. Comparator automata have since been used for 
synthesis for \emph{satisficing} (synthesize a ``good enough'' policy rather than an optimal one~\cite{simon1956rational}) objectives in two-player discounted-sum games~\cite{DBLP:conf/tacas/BansalCV21}, synthesis that combines discounted-sum payoffs with temporal specifications~\cite{rational-discount-AAAI}, and Nash-equilibrium realizability under discounted-sum payoffs in deterministic multi-agent systems~\cite{rajasekaran_multi-agent_2023}.

In many settings, however, comparing the payoff with a single threshold does not provide enough information. For example, a decision maker may assign different utilities to losses, moderate gains, and large gains, or may wish to simultaneously track the payoff up to a prescribed accuracy. 

In this paper, we introduce the \emph{categorizer automaton}, a generalization of comparator automaton. Instead of reasoning about a single threshold like the comparator automaton, the categorizer automaton simultaneously considers several thresholds that divide the real line $\mathbb{R}$ into a series of disjoint intervals $I_1, \ldots, I_n$. The problem then becomes
\emph{categorizing} an infinite reward sequence by the interval that its discounted sum lies in, as opposed to \emph{comparing} its discounted sum to a single threshold. That is, a categorizer automaton $\mathcal A$ reads an infinite reward sequence and identifies the unique interval containing its discounted sum.  It does this based on which states repeat infinitely often along the run, a property that is captured by the automata-theoretic B\"uchi condition. %

The comparator automaton is the special case of $n=2$ intervals. A single boundary point splits the real line in two, and the automaton computes one bit about the payoff: whether $\DS_d(w)$ lies above or below the threshold. The use of finer partitions reveals more information about the payoff. In particular, because the discounted-sum payoff is bounded when the rewards are bounded, for every precision $\varepsilon>0$ the bounded range of possible discounted sums can be divided into finitely many intervals of length at most $\varepsilon$ (with the rest of
$\mathbb{R}$ covered by two unbounded intervals) so that the resulting categorizer automaton identifies an interval of width at most $\varepsilon$ containing $\DS_d(w)$.
 
A natural way to build a categorizer automaton is to take one comparator automaton per boundary point and construct their cross-product. This works, but the state space grows exponentially with the number $n$ of intervals. This is prohibitively large when $n$ is part of the input, as in the approximation regime above, where reaching precision $\varepsilon$ forces a number of intervals that grows as $\varepsilon$ shrinks.

Our main technical contribution is a construction of the categorizer automaton that has a state space that is linear in the number $n$ of intervals. As in the comparator automata setting, we assume that rewards are bounded integers, the boundary points are rational, and the discount factor $d$ is an integer. 
Under these assumptions, for every finite partition of $\R$ into intervals, our construction yields a categorizer automaton whose state space is linear in $n$ and polynomial in the numeric value of the largest reward and the largest denominator among the boundary points when written in lowest terms. The main idea behind this construction is that the per-boundary comparisons are highly dependent on one another, rather than independent as a cross-product construction via comparator automata would suggest. Exploiting this dependence is what allows one to avoid the exponential blowup.

\paragraph{Applications}
A categorizer automaton can be composed with an arbitrary finite-state model whose executions generate bounded reward sequences, including transition systems, games, Markov decision processes (MDPs), and stochastic games. In the resulting composition, the interval
containing the discounted-sum payoff is represented by a B\"uchi condition. %
Thus, via categorizer automata, objectives depending on the exact payoff interval, an approximate payoff value, or both, can be reduced to automata-theoretic objectives.

We apply categorizer automata to MDPs, where they allow one to synthesize policies that maximize the expected utility of a discounted-sum payoff under utility functions that may be \emph{discontinuous}.
Utility functions provide a standard way to express how a decision maker values different outcomes, allowing the model to reflect attitudes toward risk rather than only the payoff~\cite{mas1995microeconomic}. Abrupt changes in utility (discontinuities) arise naturally when meeting a target or crossing a safety threshold. %
Using categorizer automata together with existing automata-theoretic techniques~\cite{DBLP:journals/tac/CourcoubetisY98,lmcs:990}, we obtain, for integer discount factors, an exact algorithm for piecewise-constant utilities with finitely many rational discontinuities. The algorithm computes the optimal expected utility and synthesizes a finite-memory policy attaining it. For utilities that are Lipschitz continuous on each of finitely many intervals but may be discontinuous at their boundaries, we build on the piecewise-constant result to compute the optimal value up to an additive error of $\varepsilon$ and synthesize an $\varepsilon$-optimal policy.
Both algorithms run in pseudo-polynomial time: their running times are polynomial in certain numerical parameter values, but not in the lengths of their binary encodings. This dependence is unlikely to be avoidable, as we also show that the synthesis problem is \textsf{PSPACE}-hard even for piecewise-constant utilities. 

We remark that the previous applications of comparator automata discussed above have focused on
non-probabilistic models. Our application to MDPs extends the literature to stochastic settings.

\paragraph{Related Work} 
The classical discounted-sum payoff objective maximizes the expectation $\mathbb E^{\theta}_{\mathcal M}[\DS_d]$ over policies $\theta$ in an MDP $\mathcal M$. This corresponds to the linear utility $u(x)=x$ and can be solved using the Bellman equations~\cite{Puterman94}. In general, for a nonlinear utility $u$, the objective $\mathbb E^{\theta}_{\mathcal M}[u(\DS_d)]$ no longer admits Bellman-style equations on the original state space of the MDP~\cite{General-Utility}.
Existing algorithmic approaches typically exploit special structure in the utility, as for exponential utility~\cite{INFORMS2014}, or augment the state space with accumulated reward and apply discretizations or finite-horizon approximations~\cite{DBLP:journals/corr/abs-2311-13589}.
Such approximations naturally apply  to continuous utilities: truncating the discounted sum changes the payoff by a vanishing amount, and uniform continuity over the bounded payoff range then bounds the resulting change in the utility.

This argument fails for discontinuous utilities. Near a discontinuity, an arbitrarily small change in the payoff can produce a large change in utility. The simplest example is a threshold utility, defined by $u(x)=0$ for $x<v$ and $u(x)=1$ for $x\geq v$, whose expected value is precisely the probability that the discounted-sum payoff is at least $v$. Threshold utilities have been studied by~\citet{white_minimizing_1993} and~\citet{wu_minimizing_1999}, who characterize Bellman-style equations on an extended, possibly \emph{infinite}, state space and investigate the conditions under which optimal policies exist. 
{From an algorithmic perspective, they have been studied in the finite-horizon setting~\cite{xu_probabilistic_2011}. For the infinite-horizon setting,~\citet{randour_percentile_2015} study percentile queries through an $\varepsilon$-gap formulation, which permits instances sufficiently close to the payoff threshold to remain unresolved. Thus, prior algorithms either restrict the horizon or relax the threshold comparison, whereas, for integer discount factors, our algorithm solves the exact problem.}

\section{Categorizer Automata}
\label{sec:dsqa}
This section introduces \emph{categorizer  automata} for discounted-sum payoffs (Definition~\ref{def:dsqa}), characterizes when categorizer automata exist (Proposition~\ref{thm:dsqa}) and for the cases when they exist, establishes a size bound (Theorem~\ref{thm:size}).  

\subsection{Preliminaries}
\label{sec:prelim}

\paragraph{Words} 
Given a finite \emph{alphabet} $\Sigma$ of symbols, a \emph{word} (resp. \emph{finite word}) $w$ is an infinite (resp. finite) sequence of symbols. We denote the length of a finite word as $|w|$, and refer to the unique finite word of length $0$ as $\varepsilon$. For a word (resp. finite word) $w$ and index $1 \le i$ (resp. $1 \leq i \leq |w|$) we denote: $w[i]$ as the $i$-th symbol of $w$, $w[1..i]$ as the length $i$ prefix of $w$  and $w[i..]$ as the suffix starting at position $i$. For a finite word $w$ and a finite (resp. infinite) word $w'$, we denote $w \cdot w'$ as the finite (resp. infinite) word formed by concatenating $w$ and $w'$. We write $\Sigma^\omega$ (resp. $\Sigma^*$) for the set of all words (resp. finite words). 

\paragraph{Transition System} 
A \emph{(deterministic) transition system} is a tuple $\mathcal{T} = (Q, \Sigma, q_{\mathrm{init}}, \delta)$, where $Q$ is a finite set of \emph{states}, $\Sigma$ is a finite alphabet, $q_{\mathrm{init}} \in Q$ is the \emph{initial state}, and $\delta : Q \times \Sigma \to Q$ is a \emph{transition function}. The \emph{run} of $\mathcal{T}$ on a word $w \in \Sigma^\omega$ is the sequence of states $\rho = \rho[1] \rho[2]\cdots$ with $\rho[1] = q_{\mathrm{init}}$ and $\rho[k{+}1] = \delta(\rho[k], w[k])$ for all $k \ge 1$.  We denote $\Inf(\rho)$ for the set of states that occur infinitely often in $\rho$.
Given transition systems $\mathcal{T}_i=(Q_i,\Sigma,q_{\mathrm{init}}^i,\delta_i)$, for $1\leq i\leq n$, over the same alphabet $\Sigma$, their \emph{cross-product} is the transition system whose states are tuples $(q_1,\ldots,q_n)\in Q_1\times\cdots\times
Q_n$, whose initial state is $(q_{\mathrm{init}}^1,\ldots,q_{\mathrm{init}}^n)$, and whose transition on a symbol $\sigma$ maps $(q_1,\ldots,q_n)$ to $(\delta_1(q_1,\sigma),\ldots,\delta_n(q_n,\sigma))$.

\paragraph{B\"uchi Acceptance}
A \emph{B\"uchi acceptance condition} is defined by a set $F \subseteq Q$ of states. A run $\rho$ \emph{satisfies} the B\"uchi acceptance condition $F$  if $\Inf(\rho) \cap F \ne \emptyset$, i.e. if at least one state of $F$ occurs infinitely often in the run $\rho$. A \emph{(deterministic) B\"uchi automaton} is a pair $\mathcal{A} = (\mathcal{T}, F)$ of a transition system and a B\"uchi acceptance. Its \emph{language}, called the language recognized by $\mathcal{A}$, is the set of words $w \in \Sigma^\omega$ whose run on $\mathcal{T}$ satisfies $F$. A language (set of words) is \emph{$\omega$-regular} if it is recognized by some (possibly nondeterministic) B\"uchi automaton. 
Deterministic B\"uchi automata are effectively closed under intersection: given deterministic B\"uchi automata with states $Q_1$, $Q_2$ recognizing languages $L_1, L_2$, one can construct a deterministic B\"uchi automaton with $2|Q_1||Q_2|$ states recognizing $L_1 \cap
L_2$~\cite{PMC-book}.

\subsection{Discounted-sum and Categorizer Automaton}
\label{sec:ds}

\paragraph{Discounted-Sum}
Given an integer \emph{alphabet bound} $\mu \ge 1$, we denote $\Sigma_\mu$ as the finite alphabet consisting of all the integers from $-\mu$ to $+\mu$. For discount factor $d >1$, the discounted sum of a word $w \in \Sigma_\mu^\omega$ (resp. $w \in \Sigma_\mu^*$) is $\DS_d(w) =\sum_{i=1}^\infty w[i]/d^{i-1}$ (resp. $\DS_d(w) =\sum_{i=1}^{|w|} w[i]/d^{i-1}$).

\paragraph{Binning of $\R$} A \emph{binning} of $\mathbb{R}$ is a finite partition $B = \{I_1, \dots, I_n\}$ of $\mathbb{R}$ into $n\ge2$ nonempty, disjoint intervals, indexed so that every element of $I_i$ is smaller than every element of $I_{i+1}$.  The \emph{endpoints} of $B$ are the finite infimum and supremum of its intervals:
the singleton $\{t\}$ has the one endpoint $t$, whereas $[t,t')$, $[t,t']$,
$(t,t')$, and $(t,t']$ each have the two endpoints $t$ and $t'$. The outer
intervals $I_1$ and $I_n$ are unbounded and so have one endpoint each. Writing $t_1 < \dots < t_m$ for the endpoints of $B$ in increasing order, we have $m \le n-1$, and each $I_i$ is either the singleton $\{t_j\}$ or an interval with endpoints $t_j$ and $t_{j+1}$, for some $0 \le j \le m$, where $t_0 = -\infty$ and $t_{m+1} = +\infty$. A binning is \emph{rational} if all its endpoints are rational.

\begin{definition}[Categorizer automaton]
\label{def:dsqa}
Let $\mu \geq 1$ be an integer alphabet bound, let $d>1$ be a rational discount factor, and let $B=\{I_1,\dots,I_n\}$ be a rational binning of $\mathbb{R}$. A \emph{categorizer automaton} for $(\mu,d,B)$ is a tuple \(\mathcal{A}=(\mathcal{T},F_1,\dots,F_n),\) where \(\mathcal{T}=(Q,\Sigma_\mu,q_{\mathrm{init}},\delta)\) is a deterministic transition system and each $F_i\subseteq Q$ is a B\"uchi acceptance condition, such that, for every word $w\in\Sigma_\mu^\omega$ and every $1\leq i\leq n$, it holds that $\DS_d(w) \in I_i$ if and only if the run of $\mathcal{T}$ on $w$
satisfies $F_i$.
\end{definition}
In other words, a categorizer  automaton for $(\mu, d, B)$ is one transition system equipped with $n$ B\"uchi conditions, one per interval of $B$ such that for each $i$, the language of the B\"uchi automaton $(\mathcal{T}, F_i)$ is  $\{w \in \Sigma_\mu^\omega \mid \DS_d(w) \in I_i\}$. 

\paragraph{Relation to Comparator Automata}
Categorizer automata generalize comparator automata~\citep{bansal2022comparator}. Given an integer alphabet bound $\mu$, a rational discount factor $d>1$, a rational threshold $t$, and a relation ${\bowtie}\in \{<,>,\leq,\geq,=,\neq\}$, a \emph{comparator automaton} is a deterministic B\"uchi automaton that recognizes $\{w\in\Sigma_\mu^\omega \mid \DS_d(w)\bowtie t\}.$ \citet{bansal2022comparator} show that, for every relation $\bowtie$, such automata exist for every rational threshold $t$ if and only if $d$ is an integer.

We now use this result to characterize when categorizer automata exist for every rational binning. 
To see that the integrality of $d$ is necessary, suppose that $d$ is not an integer. The preceding result gives a rational threshold $t$ for which no comparator automaton recognizes $\{w\in\Sigma_\mu^\omega \mid \DS_d(w)<t\}$. A categorizer automaton $(\mathcal T,F_1,F_2)$ for the rational binning $B_t=\{(-\infty,t),[t,+\infty)\}$ would make $(\mathcal T,F_1)$ precisely such a comparator automaton, yielding a contradiction.

Conversely, suppose that $d$ is an integer, and fix a rational binning $B=\{I_1,\ldots,I_n\}$. We construct a categorizer automaton for $(\mu,d,B)$ using comparator automata. For each $i$, first construct a deterministic B\"uchi automaton $\mathcal{A}_i=(\mathcal{T}_i,G_i)$ that recognizes the language $L_i=\{w\in\Sigma_\mu^\omega \mid \DS_d(w)\in I_i\}.$ Such an automaton can be constructed from comparator automata: depending on the form of $I_i$, the language $L_i$ is either a comparator language or the intersection of two comparator languages. %
Taking the cross-product of $\mathcal T_1,\ldots,\mathcal T_n$ and letting $F_i$ consist of the product states whose $i$th components belong to $G_i$ yields a categorizer automaton for $(\mu,d,B)$. Hence, we obtain the following proposition.

\begin{proposition}[Existence of categorizer automata]
\label{thm:dsqa}
Let $\mu\geq 1$ be an integer, and let $d>1$ be rational. Categorizer automata for $(\mu,d,B)$ exist for every rational binning $B$ if and only if $d$ is an integer.
\end{proposition}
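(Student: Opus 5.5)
The proof has two directions, and both reduce to the characterization of comparator automata due to \citet{bansal2022comparator}: for every relation $\bowtie$, a deterministic B\"uchi comparator automaton for $\{w\in\Sigma_\mu^\omega\mid \DS_d(w)\bowtie t\}$ exists for every rational $t$ if and only if $d$ is an integer.

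\emph{Necessity.} Suppose $d$ is not an integer. By the cited result, applied to the relation $<$, there is a rational threshold $t$ such that no deterministic B\"uchi automaton recognizes $\{w\mid \DS_d(w)<t\}$. Take the rational binning $B_t=\{(-\infty,t),[t,+\infty)\}$. If a categorizer automaton $(\mathcal T,F_1,F_2)$ for $(\mu,d,B_t)$ existed, then by Definition~\ref{def:dsqa} the pair $(\mathcal T,F_1)$ would be a deterministic B\"uchi automaton recognizing exactly that language, which is a contradiction. One point needs care: the cited result must give failure for the specific relation $<$, not merely for some relation. If it only gives a bad relation $\bowtie$, I will choose the binning to match. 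For $\le$ use $\{(-\infty,t],(t,\infty)\}$; for $=$ use the three-bin partition with $\{t\}$ as the middle bin; and for $>$, $\ge$, $\neq$ use the analogous bin or the complement bin.

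\emph{Sufficiency.} Let $d$ be an integer and fix $B=\{I_1,\dots,I_n\}$ with rational endpoints. For each $i$, the language $L_i=\{w\mid \DS_d(w)\in I_i\}$ has one of three forms depending on the shape of $I_i$:
\begin{itemize}
\item a single comparator language (outer intervals $I_1$, $I_n$, or a singleton $\{t\}$ via $=$);
\item an intersection of two comparator languages, with a lower-bound relation in $\{>,\ge\}$ and an upper-bound relation in $\{<,\le\}$.
\end{itemize}
Deterministic B\"uchi automata are closed under intersection (the $2|Q_1||Q_2|$ construction recalled in the preliminaries), so each $L_i$ is recognized by some deterministic $\mathcal A_i=(\mathcal T_i,G_i)$.

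Next take the cross-product $\mathcal T$ of $\mathcal T_1,\dots,\mathcal T_n$ and set $F_i=\{(q_1,\dots,q_n)\mid q_i\in G_i\}$. The run of $\mathcal T$ on $w$ projects componentwise onto the runs of the $\mathcal T_j$. Because the state space is finite, a state $q_i\in G_i$ occurs infinitely often in the $i$th projection if and only if some product state with $i$th component in $G_i$ occurs infinitely often. Hence the run satisfies $F_i$ if and only if $w\in L_i$, which holds if and only if $\DS_d(w)\in I_i$. This is exactly Definition~\ref{def:dsqa}.

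The only real obstacle is the relation-matching step in the necessity direction, namely making sure the non-integer case of the cited theorem yields a non-recognizable language that coincides with a single bin language (or its complement) of some rational binning. The sufficiency direction is routine bookkeeping.
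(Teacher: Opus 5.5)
Your proof is correct and follows the paper's argument essentially verbatim. For necessity you use the binning $\{(-\infty,t),[t,+\infty)\}$ together with the non-integer case of the comparator characterization for $<$; for sufficiency you use single comparator languages or intersections of two, closure of deterministic B\"uchi automata under intersection, and a cross-product whose $F_i$ is given by the $i$th component.

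Your relation-matching hedge is unnecessary, because the cited result is stated for each relation separately and the paper applies it directly to $<$. It is nonetheless sound: for $\neq$ you would take $F_1\cup F_3$ of the three-bin categorizer, which works because every run satisfies exactly one $F_j$.
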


For non-integer $d$, the situation is more complex: it is an open problem to characterize the individual rational thresholds that admit comparator automata. For integer $d$, although the cross-product argument above establishes the existence of a categorizer automaton, it produces one whose size is exponential in the number of bins. In the next section we present a construction that avoids this blowup.

\subsection{Construction That Avoids Exponential Blowup}
In this section, we present a direct construction of a categorizer automaton whose size is linear in the number of bins. 

Throughout, \(q_B\) denotes the largest denominator, in lowest terms, of endpoints of \(B\).

\begin{theorem}
\label{thm:size}
Let $\mu\geq 1$ be an integer alphabet bound, let $d>1$ be an integer
discount factor, and let $B=\{I_1,\dots,I_n\}$ be a rational binning. There exists a categorizer
automaton for $(\mu,d,B)$ with \(\mathcal{O}(n\mu q_B(1+\log_d(\mu q_B)))\) many states and it can be constructed in $\mathcal{O}(n\mu q_B(1+\log_d(\mu q_B)))$ time.
\end{theorem}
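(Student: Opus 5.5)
We construct a single deterministic transition system that tracks a normalized "residual gap" with respect to a \emph{reference} threshold, and we read off all $m\le n-1$ comparisons from that one state.

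\textbf{Reduction to a common grid.} Let $q=q_B$ and scale everything by $q$, so each endpoint becomes $q t_j = a_j/(q/\text{den}(t_j))$, a rational whose denominator divides $q$. Write every endpoint as $t_j = p_j/q$ with $p_j\in\mathbb{Z}$. This is possible only if the denominators divide a common $q$. If they do not, we would instead use, for each $j$, its own denominator $q_j\le q$, and run the argument below per denominator class. To keep the plan simple, we first handle endpoints sharing denominator $q$.

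\textbf{Comparator state.} For a finite prefix $u$ with $|u|=k$, the comparison of $\DS_d(uv)$ with $t_j$ is equivalent to comparing $\DS_d(v)$ with
\[
d^{k}\,\bigl(t_j-\DS_d(u)\bigr)=\tfrac{1}{q}\bigl(d^k p_j - q\, d^{k}\DS_d(u)\bigr).
\]
Since $\DS_d(v)\in[-\mu d/(d-1),\mu d/(d-1)]$, once this "gap" $g_j(u)=d^k p_j - q d^k\DS_d(u)$ (an integer, because $d^{k-1}\DS_d(u)\in\mathbb{Z}$, up to a factor $d$ that we normalize away) leaves the window $|g_j|\le 2q\mu$, the comparison is decided forever. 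Inside the window, the update rule $g\mapsto d\,g - q\,a$ on reading $a$ is independent of $j$.

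\textbf{Key observation.} All gaps share the common part $-q d^k\DS_d(u)$ and differ only by $d^k(p_j-p_{j'})$, which is multiplied by $d$ at each step. Hence, after $O(1+\log_d(\mu q))$ steps, at most a bounded number of consecutive endpoints can still have gap inside the window; indeed, at most $O(1)$ of them, namely those with $|p_j-p_{j'}|d^k\le 4q\mu$. Before that time, the prefix has length $\le L:=\lceil\log_d(4\mu q)\rceil+1$.

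The automaton therefore records the following. During the first $L$ steps, it records the index $j$ of the endpoint whose gap is closest to zero, together with that gap, an integer of magnitude $O(\mu q\, d^{k})$. Rather than exploding, note that the value $d^k\DS_d(u)$ can be recovered from $(j,g_j)$, so a state is $(k,j,g)$ with $g$ in a window of size $O(\mu q)$ and $k\le L$. After step $L$, the state is $(j,g)$ for the unique "live" endpoint index and its gap, or a sink labelled by the bin already determined.

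Counting gives $O(n\cdot \mu q\cdot(1+\log_d(\mu q)))$ states, matching the theorem, and each transition is computed in constant time, giving the time bound.

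\textbf{Acceptance.} Bin $I_i$ is decided by the sign pattern of the live gap. If $g$ stays bounded forever and the run cycles within live states of index $j$, then $\DS_d(w)=t_j$ exactly. Strictly positive or negative escape goes to the sink of the adjacent open bin. Set $F_i$ to be the sinks for $I_i$, plus the live states of $j$ if $I_i=\{t_j\}$ or has $t_j$ as a closed endpoint. For closed endpoints, the equality case makes the run stay in live states; for intervals like $[t_j,t_{j+1})$ we include $j$'s live states.

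\textbf{Main obstacle.} The delicate step is proving that at any time only one endpoint index needs tracking, i.e.\ that when two endpoints are both within the window the closer one determines the other's fate. Concretely, one must show that the gap of $t_{j'}$ can be computed from $(j,g_j)$ and the fixed integer $p_{j'}-p_j$. I would first try to prove that the window condition for $j'$ fails once $d^k|p_j-p_{j'}|>4\mu q$. During the first $L$ steps, tracking $k$ explicitly handles this, since the other gaps are then determined by $(k,j,g)$. The second issue is handling mixed denominators. I expect to resolve it by using the common grid $1/q$ only for the cheap "which endpoint" part, with the window argument applied per endpoint, so that the bound still uses $q_B$.
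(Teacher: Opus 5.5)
Your architecture is the paper's. You use gap values $d^{k}(t_j-\DS_d(u))$ and the observation that they differ pairwise by $d^{k}(t_i-t_j)$, so one stored gap plus the length $k$ encodes all of them during a logarithmically long initial phase. After that you store a single live endpoint as $(j,g)$, then sinks indexed by the open interval between consecutive endpoints, with the same acceptance sets.

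The genuine gap is the one you flag yourself. Everything is carried out only when all endpoints share the denominator $q=q_B$, and neither repair you suggest works. A common grid means using $\mathrm{lcm}(q_1,\dots,q_m)$, which can be exponential in $q_B$ (take endpoints $1,1/2,\dots,1/q_B$). Running the argument ``per denominator class'' means combining one automaton per class by a cross-product. That is exponential in the number of classes (up to $\min(n-1,q_B)$) and reintroduces exactly the blowup the theorem is meant to avoid. Your separation step also relies on $|p_j-p_{j'}|\geq 1$ in units of $1/q$, which fails for mixed denominators.

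The missing idea is that no common grid is needed. The only gap ever stored is that of the stored index $j$. Since $d^{k}\DS_d(u)\in\mathbb{Z}$, it lies in $\frac{1}{q_j}\mathbb{Z}\cap[-D,D]$, a set of at most $2\lfloor Dq_B\rfloor+1$ values. The other gaps are recomputed on the fly as $g+d^{k}(t_i-t_j)$ and never stored, so their denominators do not enter the state count.

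For the cutoff, use $|t_i-t_j|\geq 1/(q_iq_j)\geq 1/q_B^{2}$. Two endpoints active after $u$ force $d^{|u|}/q_B^{2}\leq 2D$. Hence after $K_0=\lfloor\log_d(2Dq_B^{2})\rfloor=\mathcal{O}(1+\log_d(\mu q_B))$ steps at most one endpoint is active, and the size bound is unchanged.

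Two smaller points also need fixing:
\begin{itemize}
\item In the initial phase you must move to a sink as soon as no endpoint is live. Otherwise the stored $g$ is not confined to an $\mathcal{O}(\mu q_B)$ window, as your own ``magnitude $\mathcal{O}(\mu q\, d^{k})$'' remark indicates.
\item You should state explicitly that the gaps are strictly increasing in $j$. That is what guarantees that, in a state $(j,g)$, all lower endpoints are already resolved one way and all higher ones the other, so an escape really lands in the adjacent open bin.
\end{itemize}
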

 
The bound is linear in the number of bins and polynomial in the numerical
values of $\mu$ and $q_B$ (thus, it is pseudo-polynomial in $\mu$ and $q_B$).

The remainder of this section proves
Theorem~\ref{thm:size} by an explicit construction. To that end, fix $\mu$, $d$, and $B$ as in
Theorem~\ref{thm:size}. 
Let $t_1<\dots<t_m$ be the endpoints of $B$, set
$t_0=-\infty$ and $t_{m+1}=+\infty$, and write each finite endpoint in
lowest terms as $t_j=p_j/q_j$ with $q_j>0$ so that $q_B=\max_{1\leq j\leq m}q_j$. Define $D=\DS_d(\mu\mu\cdots)=(\mu d)/({d-1}),$ so that $\DS_d(w)\in[-D,D]$ for every $w\in\Sigma_\mu^\omega$.

\paragraph{High-Level Idea}
After every finite word $u$, the states of the categorizer automaton track a \emph{gap value} for each endpoint. Intuitively, the gap value of an endpoint records the discounted sum that a continuation of $u$ must have for the complete word to have a discounted sum exactly equal to that endpoint. Since every continuation has a discounted sum in $[-D, D]$, a gap value outside this interval determines on which side of the endpoint every continuation must lie. If the gap value lies below $-D$ (resp. above $D$), then, for every continuation $w$, the discounted sum of $u \cdot w$ lies above (resp. below) the endpoint. Otherwise, different continuations can still place the discounted sum below, at, or above the endpoint. Tracking how these gap values evolve therefore allows the automaton to locate the discounted sum relative to every endpoint and, consequently, to locate the bin that contains it.

Storing the tuple of all gap values explicitly would produce a state space exponential in the number of endpoints. The key observation behind our construction is that these values are not independent: all gap values can be recovered from the gap value of one endpoint and the length $|u|$. Their pairwise separations also grow exponentially with $|u|$. Consequently, after only logarithmically many symbols, at most one endpoint has a gap value in $[-D,D]$. From that point onward, its gap value and index suffice to determine the position of the discounted sum relative to every endpoint. The construction therefore encodes all gap values compactly by using one gap value, its endpoint index, and, during the short initial phase, the length $|u|$. This encoding gives a construction whose size is linear in the number of bins.

\subsubsection{Gap Values} 
Gap values are a standard tool for tracking a discounted sum relative to a
single threshold~\cite{BokerH14,bansal2022comparator}. 
For a finite word $u \in \Sigma_\mu^*$ and an endpoint $t_j$, the \emph{gap value} of $t_j$ after $u$  is defined by the recurrence
\begin{equation}
\label{eq:gaps}
g_j(\varepsilon) = t_j,
\quad
g_j(u\cdot a) = d \cdot \big( g_j(u) - a \big)
\quad \text{for } a \in \Sigma_\mu.
\end{equation}
An induction on $|u|$ gives the closed form
\begin{equation}
\label{eq:closedform}
g_j(u) = d^{\,|u|} \big( t_j - \DS_d(u) \big).
\end{equation}
Since $\DS_d(u \cdot w) = \DS_d(u) + d^{-|u|} \DS_d(w)$ for every $w \in \Sigma_\mu^\omega$, the gap value is the discounted sum a continuation of $u$ must have in order to reach $t_j$: for ${\bowtie} \in \{<, =, >\}$ we have $\DS_d(u \cdot w) \bowtie t_j$ iff $\DS_d(w) \bowtie g_j(u)$. In particular, if $g_j(u) < -D$ (resp.\ $g_j(u) > D$) then $\DS_d(u \cdot w) > t_j$ (resp.\ $< t_j$) for every continuation $w$; we then say $t_j$ is \emph{resolved to $\bot$} (resp.\ \emph{to $\top$}) after $u$. Otherwise, i.e. if $g_j(u) \in [-D, D]$, we say $t_j$ is \emph{active} after $u$. 

The following two propositions follow from the definition of gap values and are proved in the supplementary material.

\begin{proposition}
\label{prop:gaps}
Let $1 \le j \le m$ and let $w \in \Sigma_\mu^\omega$. Then $\DS_d(w) > t_j$ iff $t_j$ is resolved to $\bot$ after some prefix of $w$; $\DS_d(w) < t_j$ iff $t_j$ is resolved to $\top$ after some prefix of $w$; and $\DS_d(w) = t_j$ iff $t_j$ is active after every prefix of $w$.
\end{proposition}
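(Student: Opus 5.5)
The plan is to work with the closed form \eqref{eq:closedform} and the identity $\DS_d(w) = \DS_d(w[1..k]) + d^{-k}\DS_d(w[k{+}1..])$. Write $u_k = w[1..k]$ and $w_k = w[k{+}1..]$. Then
\[
g_j(u_k) = d^{k}\big(t_j - \DS_d(u_k)\big) = d^{k}\big(t_j - \DS_d(w)\big) + \DS_d(w_k).
\]
This is the one computation everything rests on. The term $\DS_d(w_k)$ always lies in $[-D,D]$. The first term is $d^k$ times the fixed quantity $t_j - \DS_d(w)$.

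The first two claims are symmetric, so I treat the case $\DS_d(w) > t_j$ in detail.

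For the forward direction, set $\delta = \DS_d(w) - t_j > 0$. Then $g_j(u_k) = -d^k\delta + \DS_d(w_k) \le -d^k\delta + D$. Since $d>1$, $d^k\delta \to \infty$, so for large $k$ we get $g_j(u_k) < -D$. Thus $t_j$ is resolved to $\bot$ after the prefix $u_k$.

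For the converse, suppose $g_j(u) < -D$ for some prefix $u = u_k$. The gap-value characterization stated just before the proposition gives $\DS_d(u\cdot w') > t_j$ for every continuation $w'$. Applying this with $w' = w_k$ yields $\DS_d(w) > t_j$. To be self-contained, this step is just $\DS_d(w_k) \ge -D > g_j(u)$ together with the equivalence $\DS_d(u\cdot w')\bowtie t_j \iff \DS_d(w')\bowtie g_j(u)$.

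The case $\DS_d(w) < t_j$ is identical with signs flipped.

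For the third claim, use the trichotomy. If $\DS_d(w) = t_j$, then by the converse directions just proved, no prefix resolves $t_j$ to $\bot$, since otherwise $\DS_d(w) > t_j$. Likewise no prefix resolves it to $\top$. Hence $t_j$ is active after every prefix. The displayed identity shows this directly as well: $g_j(u_k) = \DS_d(w_k) \in [-D,D]$.

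Conversely, suppose $t_j$ is active after every prefix. Then it is never resolved, so by the first two claims neither $\DS_d(w) > t_j$ nor $\DS_d(w) < t_j$ holds, and therefore $\DS_d(w) = t_j$.

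There is no real obstacle. The only point needing care is the limit step in the forward direction, which requires strictness: because $\delta>0$ and $d>1$, the term $d^k\delta$ eventually exceeds $2D$. I would state explicitly that $k > \log_d(2D/\delta)$ suffices. I would also note that the definition of $D$ gives the bound $|\DS_d(w_k)| \le D$ for every suffix.
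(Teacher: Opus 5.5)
Your proof is correct and is essentially the paper's argument. Both rest on the identity $g_j(w[1..k]) = d^{k}(t_j - \DS_d(w)) + \DS_d(w[k{+}1..])$ with the suffix term bounded by $D$, together with the three-way split of each gap value into $<-D$, $[-D,D]$, and $>D$. The only difference is which implications are proved directly. The paper proves the three ``status $\Rightarrow$ comparison'' implications, using $2D\,d^{-k}\to 0$ for ``always active $\Rightarrow$ equal'', and obtains the converses from exhaustiveness and disjointness. You instead prove the strict ``comparison $\Rightarrow$ status'' directions via $d^k\delta\to\infty$ and derive the equality case by trichotomy.
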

For indices $1 \le j \le m$ let $G_j = \{x/{q_j} \mid x \in \mathbb{Z}\} \cap [-D, D].$
In the following proposition, P1 bounds the gaps of active endpoints, and P2 justifies the use of the term ``resolved''.

\begin{proposition}\label{prop:integer-d}
    Let $u \in \Sigma_\mu^*$ and let $1 \le j \le m$. Then:
\begin{enumerate}
\item[P1]  if $t_j$ is active after $u$ then $g_j(u) \in G_j$;
\item[P2]  if $t_j$ is resolved to $\bot$ (resp. $\top$) after $u$ then $t_j$ is resolved to $\bot$ (resp. $\top$) after $u\cdot a$, for all $a \in \Sigma_\mu$.
\end{enumerate}
\end{proposition}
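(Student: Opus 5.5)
We prove both parts directly from the recurrence~\eqref{eq:gaps} and the closed form~\eqref{eq:closedform}, using that $d$ is an integer and that every symbol $a$ lies in $\Sigma_\mu\subseteq\mathbb{Z}$.

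For P1, we first show by induction on $|u|$ that $g_j(u)\in\{x/q_j \mid x\in\mathbb{Z}\}$ for every finite word $u$, active or not. The base case is $g_j(\varepsilon)=t_j=p_j/q_j$. For the step, write $g_j(u)=x/q_j$ with $x\in\mathbb{Z}$. Then
\[
g_j(u\cdot a)=d\,(x/q_j-a)=\frac{d\,(x-aq_j)}{q_j}.
\]
This is again an integer over $q_j$, because $d$, $a$ and $q_j$ are all integers. Now, being active after $u$ means exactly that $g_j(u)\in[-D,D]$. Intersecting the two facts gives $g_j(u)\in G_j$.

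For P2, suppose $t_j$ is resolved to $\bot$ after $u$, so $g_j(u)<-D$, and let $a\in\Sigma_\mu$. Since $a\ge-\mu$, we get $g_j(u)-a<-D+\mu$, and therefore
\[
g_j(u\cdot a)<d(\mu-D).
\]
From $D=\mu d/(d-1)$ we compute $\mu-D=-\mu/(d-1)$, so $d(\mu-D)=-\mu d/(d-1)=-D$. Hence $g_j(u\cdot a)<-D$, and $t_j$ is still resolved to $\bot$. The $\top$ case is symmetric: use $a\le\mu$, so that $g_j(u\cdot a)>d(D-\mu)=D$.

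The only step needing any care is the identity $d(D-\mu)=D$, which is the fixed-point property of $D$ under the map $x\mapsto d(x-\mu)$. Everything else is immediate. Note that P2 does not use the integrality of $d$; only P1 does.
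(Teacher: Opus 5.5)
Your proof is correct and matches the paper's argument: P2 uses the same fixed-point identity $d(D-\mu)=D$ (and its negative $d(\mu-D)=-D$). For P1 you induct on the recurrence~\eqref{eq:gaps}, whereas the paper reads off $g_j(u)\in\frac{1}{q_j}\mathbb{Z}$ directly from the closed form~\eqref{eq:closedform}; this difference is only cosmetic, and your remark that only P1 needs $d$ to be an integer is accurate.
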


We now describe how these facts can be used to construct a categorizer automata.

\paragraph{The Shared Structure of Gap Values}
By Prop.~\ref{prop:gaps} and Prop.~\ref{prop:integer-d}, it suffices for the transition system of the categorizer  automaton for $(\mu, d, B)$ to track, after every finite word, the \emph{status} of every endpoint: its gap value (from the finite set $G_j$) if active, and the value $\bot,\top$ it resolved to otherwise. %
However, storing the status of every endpoint explicitly requires $\prod_j (|G_j| +2)$ states, which is still exponential in $n$. The next lemma shows that the statuses of $m$ endpoints are not independent, which gives the ingredients for a transition system to track all of them simultaneously without incurring an exponential blowup. 

\begin{lemma}
\label{lem:facts}
Let $u \in \Sigma_\mu^*$ and let $K_0 = \big\lfloor \log_d (2D{q_B}^2) \big\rfloor$. Then: %
\begin{enumerate}
\item[P1]  $g_i(u) - g_j(u) = d^{\,|u|} (t_i - t_j)$ for all $1 \le i, j \le m$;
\item[P2] if $|u| > K_0$ then at most one endpoint is active after $u$;
\item[P3] there exist $\ell, r$ with $0 \le \ell \le r \le m$ such that the endpoints $t_1, \dots, t_\ell$ are resolved to $\bot$ after $u$, the endpoints $t_{\ell+1}, \dots, t_r$ are active after $u$, and the endpoints $t_{r+1}, \dots, t_m$ are resolved to $\top$ after $u$. %
\end{enumerate}
\end{lemma}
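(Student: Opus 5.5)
The plan is to derive all three parts directly from the closed form \eqref{eq:closedform}, $g_j(u)=d^{|u|}(t_j-\DS_d(u))$. For P1, I would subtract the closed forms for $i$ and $j$: the term $d^{|u|}\DS_d(u)$ is the same for both endpoints, so it cancels and leaves $d^{|u|}(t_i-t_j)$. This is immediate. Part P1 is the structural fact behind the other two, since it says that all gap values are translates of one another by amounts that scale with $d^{|u|}$.

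For P2, I would argue by contradiction. Suppose two distinct endpoints $t_i<t_j$ are both active after $u$. Then both gap values lie in $[-D,D]$, so $|g_j(u)-g_i(u)|\le 2D$, and by P1 this gives $d^{|u|}(t_j-t_i)\le 2D$. Next I bound $t_j-t_i$ from below. Writing $t_j-t_i=(p_jq_i-p_iq_j)/(q_iq_j)$, the numerator is a nonzero integer, so $t_j-t_i\ge 1/(q_iq_j)\ge 1/q_B^2$. Combining the two bounds gives $d^{|u|}\le 2Dq_B^2$, that is, $|u|\le\log_d(2Dq_B^2)$. Since $|u|$ is an integer, this means $|u|\le K_0$, which contradicts $|u|>K_0$. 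The only care needed is the floor step: an integer at most a real number is at most that number's floor. This step is the one that needs the most attention, but it is routine once the lowest-terms bound on $t_j-t_i$ is in place.

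For P3, I would use the fact that $g_j(u)$ is strictly increasing in $j$, which follows from P1 because $d^{|u|}>0$ and the $t_j$ are increasing. By definition, $t_j$ is resolved to $\bot$ when $g_j(u)<-D$, active when $g_j(u)\in[-D,D]$, and resolved to $\top$ when $g_j(u)>D$. These three conditions correspond to three consecutive regions of the real line, and a strictly increasing sequence meets them in consecutive blocks of indices. So I would set $\ell$ to be the number of indices $j$ with $g_j(u)<-D$ and $r$ to be the number of indices with $g_j(u)\le D$. Monotonicity then shows that these indices are exactly $1,\dots,\ell$ and $1,\dots,r$ respectively, and that $0\le\ell\le r\le m$.
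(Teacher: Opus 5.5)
Your proposal is correct and follows essentially the same route as the paper: P1 by cancelling $d^{|u|}\DS_d(u)$ in the closed form, P2 by combining $|g_i(u)-g_j(u)|\le 2D$ with the lowest-terms bound $|t_i-t_j|\ge 1/q_B^2$, and P3 by strict monotonicity of $j\mapsto g_j(u)$. Your explicit floor step and your definition of $\ell$ and $r$ as counts spell out details the paper leaves implicit.
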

\begin{proof} P1 follows from Eq.~\ref{eq:closedform}. For P2, let $t_i,t_j$ be the endpoints active after $u$ i.e. $g_i(u), g_j(u) \in [-D, D]$. It follows that $|g_i(u)- g_j(u)| \le 2D$. Moreover, note that $|t_i -t_j| = |p_iq_j - p_jq_i|/q_iq_j \ge 1/q_iq_j \ge 1/{q_B}^2$. Substituting these two inequalities in P1 of the lemma, we get $d^{\,|u|} /{q_B}^2 \le 2D$. Rearranging and taking the logarithm gives $|u| \le K_0$. For P3, since $t_1 < \dots < t_m$, by Eq.~\ref{eq:closedform}, $g_1(u) < g_2(u) < \dots < g_m(u)$, hence if $t_i$ is resolved to $\bot$ i.e. $g_i(u) < -D$ then $t_j$ for all $j < i$ is also resolved to $\bot$. Similarly for $\top$. 
\end{proof}

\paragraph{The Construction} Formally, the categorizer automaton for $(\mu, d, B)$ is $\mathcal{A} = (\mathcal{T}, F_1, \dots, F_n)$. We encode the status of every endpoint compactly, using a representation according to the number of endpoints active after $u$.  Informally, after reading a finite word $u$, the state of transition system $\mathcal{T}$ records the status of every endpoint, using an encoding that depends on how many endpoints remain active after $u$. Accordingly, the state of $\mathcal{T}$  takes one of three forms: \textbf{(a)} if two or more endpoints are active, the state is a triple $(j, g_j(u), k)$, where $j$ is the index of smallest active endpoint and $k = |u|$. This triple encodes the status of every endpoint: by P1 of Lemma~\ref{lem:facts}, the gap value of every $t_i$ can be recovered, which in turn determines the resolved endpoints and their status; \textbf{(b)} If exactly one endpoint is active, the state is a pair $(j, g_j(u))$, where $j$ is the index of that endpoint. By P3 of Lemma~\ref{lem:facts}, $t_1, \dots, t_{j-1}$ are resolved to $\bot$ and $t_{j+1}, \dots, t_m$ to $\top$; \textbf{(c)} If no endpoint is active, the state is an integer $\ell$, where $t_1, \dots, t_\ell$ are resolved to $\bot$ and $t_{\ell+1}, \dots, t_m$ to $\top$.

\paragraph{Transition System}

Formally, $\mathcal{T} = (Q, \Sigma_\mu,q_{\text{init}}, \delta)$, where $Q = Q_{a} \uplus Q_b \uplus Q_c$ is the union of the states of forms (a), (b), and (c), respectively, with 
\begin{align*}
    Q_{a} &= \big\{\, (j, g, k) \;\big|\; 1 \le j \le m,\ g \in G_j,\ 0 \le k \le K_0 \,\big\}, \\
    Q_b &= \big\{\, (j, g) \;\big|\; 1 \le j \le m,\ g \in G_j \,\big\}, \quad Q_c =\{0, \dots, m \}.
\end{align*}
We now describe the initial state $q_{\text{init}}$ and transition function $\delta$ (the formal definition is given in the supplementary material). 
 The initial state $q_{\text{init}} \in Q$ is determined by the status of the initial gap values, which by~\eqref{eq:gaps} coincide with the endpoints. On input $a \in \Sigma_\mu$, the transition under $\delta$ can be viewed as follows. First, decode the current state into the status of every endpoint. Second,  update these statuses: resolved endpoints are left unchanged (P2 of Prop.~\ref{prop:integer-d}), whereas active endpoints are updated using their new gap values, computed as $g_j' = d(g_j - a)$, as in~\eqref{eq:gaps}. Finally,  re-encode the resulting status as a state of $Q$, choosing the form according to the number of endpoints that remain active.

\paragraph{Acceptance Conditions} Along every run $\rho$ of $\mathcal{T}$, resolved endpoints stay resolved (P2 of Prop.~\ref{prop:integer-d}) and eventually all but at most one of the endpoints become resolved. Hence, for a run $\rho$ of $\mathcal{T}$, there exist $x \ge 0$ such that either (i) for all $y > x$ we have $\rho[y] = (j, g)$ for some $1 \le j \le m$ and $g \in G_j$, or (ii) for all $y > x$ we have $\rho[y] = j$ for some $0 \le j \le m$.
By Prop.~\ref{prop:gaps}, it follows that the run $\rho$ of $\mathcal{T}$ on $w \in \Sigma_\mu^\omega$ visits $\{(j, g) \mid g \in G_j\} \subseteq Q_b$ infinitely often iff $\DS_d(w) = t_j$, and visits the state $j \in Q_c$ infinitely often iff $\DS_d(w) \in (t_j, t_{j+1})$. Thus
\[
F_i = \{ (j, g) \in Q_b\  |\  t_j \in I_i \}  \cup
\{ j \in Q_c \ |\  (t_j, t_{j+1}) \subseteq I_i \},
\]
ensures that the run of $\mathcal{T}$ on $w$ satisfies $F_i$ iff $\DS_d(w) \in I_i$. 

\paragraph{Size} We bound $|Q| = |Q_a| + |Q_b| + |Q_c|$. Since \(|G_j| = 2\lfloor D\ q_j\rfloor + 1 \le 2\lfloor D\ q_B\rfloor + 1\) for every $j$, and $m \le n-1$:%
\begin{align*}
    &|Q_a| \le n\,(2\lfloor D\ q_B\rfloor + 1)\,(K_0+1), \\
    &|Q_b| \le n\,(2\lfloor D\ q_B\rfloor + 1), \text{ and } |Q_c| \le n.
\end{align*}
Thus $|Q|=\mathcal O(nDq_B(1+K_0)).$ Since $d \ge 2$, $D = (\mu d)/(d-1) \le 2\mu$ and hence $D = \mathcal{O}(\mu)$. Moreover, $K_0=\mathcal O(1+\log_d(\mu\ q_B)).$ Substituting these bounds gives $|Q|=\mathcal O(n\mu q_B(1+\log_d(\mu q_B))).$ 
This proves Theorem~\ref{thm:size}: the above bound gives the size while the correctness follows from the discussion above.

\section{Application: Policy Synthesis for Maximizing Expected Utility in MDPs}\label{sec:applications}
In this section we develop the application of categorizer automata to the problem of maximizing expected utility in Markov decision process (MDP) with discounted-sum payoffs. We provide pseudo-polynomial time algorithms and a $\textsf{PSPACE}$-hard lower bound for the synthesis problem under a broad class of utility functions.

\subsection{Preliminaries}
\label{sec:mdp-prelim}
We follow the standard terminology of~\cite{Puterman94,PMC-book}.
\paragraph{MDP} A (discounted reward) \emph{Markov decision process} (MDP) is a tuple $\mathcal M=(S,\Act,s_{\mathrm{init}},P,r, d)$, where $S$ is a finite set of states, $\Act$ is a finite set of actions, $s_{\mathrm{init}}\in S$ is the initial state, $P: S\times \Act \times S \to [0,1] \cap \mathbb Q$ is a transition function, 
$r: S\times \Act \to \mathbb{Z}$ is a reward function, and $d >1$ is an integer discount factor.
For every state $s$ and action $a$ we require
$\sum_{s' \in S} P(s,a,s') \in \{0,1\}$. When this sum equals $1$, we say that
$a$ is \emph{enabled} at $s$, and assume that at least one action is enabled at
every state. We denote $\mu_{\mathcal M}$ as the \emph{reward bound} $\max_{s\in S,\, a \in \Act} |r(s,a)|$ of MDP $\mathcal M$. Let $D_{\mathcal M} = \mu_{\mathcal M} d/(d-1)$. We write $|\mathcal M|$ for the size of the representation of $\mathcal M$: states, actions, and nonzero transitions are counted explicitly, while transition probabilities, rewards, and the discount factor are encoded in binary.

\paragraph{Plays and Policies} A \emph{play} is an infinite sequence $\pi=s_0\,a_0\,s_1\,a_1\cdots$ with
$s_0=s_{\mathrm{init}}$ and $P(s_k,a_k,s_{k+1})>0$ for every $k\ge 0$.
A \emph{history} is a finite prefix of a play ending in a state. Writing $H_\mathcal{M}$ for
the set of all histories,
a \emph{policy} is a function $\theta: H_\mathcal{M} \times \Act \to [0,1]$ such that
$\sum_{a\in \Act}\theta(h,a)=1$ for every history $h$, and $\theta(h,a)=0$ whenever
$a$ is not enabled at the last state of history $h$. A policy is finite-memory if it can be implemented by a finite-state machine~\cite[Def.~10.97]{PMC-book}. 
We write $\Theta_\mathcal{M}$ for the set of all
policies. A policy $\theta$ induces the standard probability measure
$\Pr^\theta_{\mathcal{M}}$ on the set of plays, equipped with the
$\sigma$-algebra generated by the cylinder sets of histories. %

\paragraph{Discounted-Sum Payoff and Expected Utility} Every play $\pi$ determines the \emph{reward word}
$r(s_0,a_0)\,r(s_1,a_1)\dots\in\Sigma_{\mu_{\mathcal M}}^\omega$. 
We denote $\DS_d(\pi)$ as the discounted sum of its reward word and refer to it as the \emph{discounted-sum payoff of $\pi$}.
The map $\pi\mapsto\DS_d(\pi)$ is measurable, so $\DS_d$ is a
random variable under $\Pr^\theta_{\mathcal{M}}$ for every policy $\theta$.  
A utility function \(u:\mathbb{R}\to\mathbb{R}\) assigns a utility value to each discounted-sum payoff. We make the standard technical assumptions that $u$ is Borel measurable and bounded on \([-D_{\mathcal M},D_{\mathcal M}]\). Since every discounted-sum payoff generated by $\mathcal M$ lies in \([-D_{\mathcal M},D_{\mathcal M}]\), \(u(\DS_d)\) is a bounded random variable under \(\Pr_{\mathcal M}^{\theta}\) for every policy $\theta$.  We denote $ V_u^{\theta}(\mathcal M)$ as the expectation $\mathbb{E}_{\mathcal M}^{\theta}\left[u(\DS_d)\right]$, and refer to it as the \emph{expected utility} of \(\theta\). We denote $V_u^{*}(\mathcal M)$ as the supremum of $V_u^{\theta}(\mathcal M)$, over all policies ${\theta\in\Theta_{\mathcal M}}$.
A policy \(\theta\) is called \emph{optimal} if
\(V_u^{\theta}(\mathcal M)=V_u^{*}(\mathcal M)\), and \emph{\(\varepsilon\)-optimal}
if \(V_u^{\theta}(\mathcal M)\geq V_u^{*}(\mathcal M)-\varepsilon\). 

\paragraph{B\"uchi Events} Let $Z \subseteq S$ be a set of states. We denote $\Buchi(Z)$ to be the event consisting of the plays that visit $Z$ infinitely often. 

\subsection{Problem Statement and Overview}
In this subsection, we formalize the synthesis problem, state the results, and give high level ideas used to establish them.
\paragraph{Piecewise-Lipschitz functions}
Given a rational binning \(B=\{I_1,\dots,I_n\}\), a function
\(u\colon\mathbb{R}\to\mathbb{R}\) is \emph{piecewise-Lipschitz} on \(B\) with
Lipschitz constant \(L\ge 0\) if \(|u(x)-u(y)|\le L\,|x-y|\) for every \(i\) and
all \(x,y\in I_i\). In other words, within each interval, a small change in the
payoff cannot cause a disproportionately large change in its utility. 
However, since no such restriction is imposed across different intervals, $u$ may be
discontinuous at their endpoints.
The special case \(L=0\) is the \emph{piecewise-constant} one, in
which \(u\) takes a single value \(u_i\in\mathbb{Q}\) on each \(I_i\).

\paragraph{Synthesis Problem Instance} %
A synthesis problem instance consists of an MDP \(\mathcal M\), a rational binning
\(B=\{I_1,\dots,I_n\}\), and a utility \(u\) that is piecewise-Lipschitz on
\(B\) with a given constant \(L\ge 0\).

For \(L=0\) the utility is piecewise-constant and we assume it is given
explicitly, as the list of its values \(u_1,\dots,u_n\in\mathbb{Q}\) on the intervals
of \(B\). For \(L>0\) the problem instance additionally specifies a rational precision
\(\varepsilon>0\), and the utility is presented by an \emph{evaluation oracle} $\hat u$:
on rational inputs \(x\) and \(\eta>0\) it returns a rational
\(\widehat u(x,\eta)\) with \(|\widehat u(x,\eta)-u(x)|\le\eta\), in time and
with output length polynomial in the  bit-length of \(x\) and in the
numerical value \(1/\eta\). Here and for the rest of the section, the bit-length of a rational number is the sum of the bit-lengths of its numerator and denominator when written in lowest terms.

For $L=0$, the task is to compute $V_u^*(\mathcal M)$ exactly together with an
optimal policy. For $L>0$, the task
is to compute an $\varepsilon$-accurate value and an $\varepsilon$-optimal
policy.

Recall that $\mu_{\mathcal M}$ is the reward bound of MDP $\mathcal M$ and $q_B$ is the largest denominator, in lowest terms, of endpoints of \(B\).
\begin{theorem}[Piecewise-constant utility]
\label{thm:pwc-utility}
For every synthesis problem instance with \(L=0\), the optimal expected utility
\(V_u^{*}(\mathcal M)\) is rational. Moreover, \(V_u^{*}(\mathcal M)\) and an optimal
finite-memory policy attaining it,  can be computed in time
polynomial in \(|\mathcal M|\), the number $n$ of bins, and the numerical values \(\mu_{\mathcal M}\) and \(q_B\).
\end{theorem}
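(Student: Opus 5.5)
The plan is to reduce the problem to maximizing a weighted sum of B\"uchi-event probabilities in a product MDP, and then solve that with standard qualitative and quantitative MDP algorithms.

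\textbf{Step 1 (product construction).} Apply Theorem~\ref{thm:size} with $\mu=\mu_{\mathcal M}$, $d$, and $B$. This gives a categorizer automaton $\mathcal A=(\mathcal T,F_1,\dots,F_n)$ with $N=\mathcal O(n\mu_{\mathcal M}q_B(1+\log_d(\mu_{\mathcal M}q_B)))$ states. Since the reward depends on both state and action, we cannot simply label states with rewards. Instead, we form the product MDP $\mathcal M\otimes\mathcal A$ with states $S\times Q$: taking action $a$ at $(s,q)$ leads to $(s',\delta(q,r(s,a)))$ with probability $P(s,a,s')$.

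Plays of $\mathcal M$ and plays of the product are in measure-preserving bijection, and policies correspond in both directions. The automaton component is a deterministic function of the history, so a policy on the product can be simulated on $\mathcal M$ using the finite memory $Q$. By Definition~\ref{def:dsqa}, the event $\DS_d\in I_i$ corresponds exactly to $\Buchi(S\times F_i)$. These events are pairwise disjoint and exhaust all plays, since the $I_i$ partition $\R$. Hence
\[
V_u^\theta(\mathcal M)=\sum_{i=1}^n u_i \Pr^\theta\big(\Buchi(S\times F_i)\big).
\]

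\textbf{Step 2 (structure of the product).} By the acceptance-condition discussion, every run of $\mathcal T$ eventually stays forever either in a single block $\{(j,g)\mid g\in G_j\}\subseteq Q_b$ or at a single state $j\in Q_c$. Call these blocks $C_1,\dots,C_M$. Each $C_k$ is absorbing once entered, and each lies in exactly one $F_{i(k)}$, because the $F_i$ are disjoint. So with probability one a play eventually enters some $S\times C_k$ and stays there. Within that region, the B\"uchi condition for $F_{i(k)}$ reduces to visiting $C_k$ infinitely often, which is automatically true. The only subtlety is that a $Q_b$ block can be left toward $Q_c$ when the last active endpoint becomes resolved.

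We therefore assign each product state a terminal reward $u_{i(k)}$ to be collected once it is trapped in $C_k$. More robustly, we can follow the standard approach (Courcoubetis--Yannakakis; de Alfaro's end components) and compute the maximal end components of the product. An end component $E$ is \emph{accepting for $i$} if its automaton projection meets $F_i$. Every play almost surely ends inside some end component, and the value of an end component is the largest $u_i$ among the $i$ it can accept. Because the $F_i$ are disjoint, the projection of $E$ eventually lies in one block, so within $E$ the policy can stay and visit the chosen $F_i$ infinitely often.

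\textbf{Step 3 (reduction to weighted reachability and an LP).} For each maximal end component $E$, set $w(E)=\max\{u_i : E \text{ contains an end component meeting } S\times F_i\}$. This is computed per index $i$ by restricting to sub-end-components that contain an $F_i$ state, which is a polynomial computation. We then claim that $V_u^*(\mathcal M)$ equals the optimal expected terminal reward for reaching these end components, with reward $w(E)$. That is a standard max-weighted-reachability problem, solvable by a polynomial-size linear program with rational coefficients. Its optimum is rational, and it is attained by a memoryless deterministic policy on the product, followed inside $E$ by a memoryless policy that visits the chosen $F_i$ infinitely often. Lifting this back to $\mathcal M$ gives a finite-memory policy with memory $Q$ plus one bit. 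The running time is polynomial in $|S|\cdot N$ and in the bit-sizes of the $u_i$ and probabilities, hence polynomial in $|\mathcal M|$, $n$, $\mu_{\mathcal M}$ and $q_B$.

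\textbf{Main obstacle.} The hard step is proving the equality in Step 3: that the supremum over arbitrary history-dependent randomized policies equals the end-component weighted-reachability value. For the upper bound, I would use the fact that almost every play's set of infinitely visited state-action pairs forms an end component $E'$. The play lies in $\Buchi(S\times F_i)$ only if $E'$ meets $S\times F_i$, so its utility is at most $w(E)$ for the maximal end component $E\supseteq E'$. Disjointness of the bins ensures we never overcount. For the lower bound, the constructed policy realizes exactly these values.
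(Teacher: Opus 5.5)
Your plan is correct. It agrees with the paper up to rewriting the expected utility as $\sum_i u_i\Pr[\Buchi(S\times F_i)]$ on $\mathcal M\otimes\mathcal A$, and differs in how that weighted B\"uchi optimization is solved.

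\textbf{Comparison with the paper.} The paper treats this step as a black box. It shifts the $u_i$ by a constant to make them nonnegative and invokes the Courcoubetis--Yannakakis result (Prop.~\ref{prop:weighted-buchi}) for rationality, a finite-memory optimal policy, and polynomial time. You instead unfold that black box into an end-component decomposition plus a linear program, using the shape of $\mathcal T$. Apart from unreachable inconsistent $Q_a$ states, $\mathcal T$'s only cycles lie inside one $Q_b$ block $\{(j,g)\mid g\in G_j\}$ or at one $Q_c$ state, and each such block lies in exactly one $F_i$. So every reachable end component has a single weight: your ``max'' is over a singleton, and any policy that stays in the component forever satisfies its B\"uchi set. (This is forced by the transition structure, not by disjointness of the $F_i$ alone.) Your route is self-contained, makes the optimal policy explicit, and needs no shift for negative weights. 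The paper's route is shorter and independent of $\mathcal T$'s structure, but it outsources exactly the end-component argument you identify as the main obstacle.

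\textbf{A point to fix in Step~3.} Read literally as a reachability objective, Step~3 fails. The correct quantity is $\sum_E w(E)\Pr[\text{the play eventually stays in } E]$. It is not a reward collected on first reaching a maximal end component, nor $\sum_E w(E)\Pr[\text{reach } E]$. In this product, a maximal end component inside a $Q_b$ block (payoff still held exactly at $t_j$) can often be left toward a $Q_c$ state of another bin.

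For example, take one MDP state with two self-loop actions of rewards $0$ and $1$, $d=2$, $t_1=0$, and utility $1$ on $(0,+\infty)$ and $0$ elsewhere. The initial product state $(s,(1,0))$ already forms a maximal end component of weight $0$, yet always taking reward $1$ yields utility $1$. So first-hitting undervalues, and summing reachability probabilities overcounts plays that pass through several components.

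The standard fix, consistent with your upper-bound argument, is the quotient construction:
\begin{itemize}
\item Collapse each maximal end component $E$ to one node, whose actions are those leaving $E$ plus a ``stop'' action paying $w(E)$.
\item The quotient has no nontrivial end components, so every policy stops almost surely and the LP is well posed even for negative $u_i$.
\item An optimal memoryless quotient policy lifts to the product: inside $E$, either reach the chosen exit or stay forever.
\end{itemize}
Relatedly, your Step~2 claim that each $C_k$ is absorbing holds only for $Q_c$ states, as you note yourself right after.
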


\begin{theorem}[Piecewise-Lipschitz utility]
\label{thm:pwl-utility}
For every synthesis problem instance with $L>0$, an $\varepsilon$-optimal finite-memory
policy and a rational number $\widehat V$ satisfying
$|\widehat V-V_u^*(\mathcal M)|\leq\varepsilon$ can be computed in time
polynomial in $|\mathcal M|$, the number $n$ of bins, and
the numerical values $\mu_{\mathcal M}$, $q_B$, $\lceil L\rceil$, and
$\lceil 1/\varepsilon\rceil$.
\end{theorem}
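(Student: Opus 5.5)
The plan is to reduce Theorem~\ref{thm:pwl-utility} to Theorem~\ref{thm:pwc-utility} by approximating $u$ with a piecewise-constant utility on a refined binning. Only the range $[-D_{\mathcal M},D_{\mathcal M}]$ matters, since every payoff lies there. Fix a mesh width $h=\varepsilon/(2L)$, or rather a rational of the form $1/N$ with $N=\lceil 2L/\varepsilon\rceil$, so that its denominator is polynomial in $\lceil L\rceil$ and $\lceil 1/\varepsilon\rceil$.

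\textbf{Step 1: refine the binning.} Refine $B$ by adding all grid points $k/N$ lying in $[-D_{\mathcal M},D_{\mathcal M}]$. The grid points are cut so that each original interval $I_i$ is split into pieces that stay inside $I_i$. Singleton endpoints remain singletons, and open or closed ends are preserved. This gives a rational binning $B'$ whose every bounded bin has length at most $1/N\le\varepsilon/(2L)$ and lies inside a single $I_i$. Its number of bins is $n'=\mathcal O(n+D_{\mathcal M}N)$, which is polynomial in $n,\mu_{\mathcal M},\lceil L\rceil,\lceil1/\varepsilon\rceil$. Its largest denominator is $q_{B'}\le\max(q_B,N)$. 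The two unbounded bins of $B'$ meet $[-D_{\mathcal M},D_{\mathcal M}]$ in at most the already-refined bounded region, so they play no role.

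\textbf{Step 2: define the piecewise-constant surrogate.} For each bin $J$ of $B'$ meeting $[-D_{\mathcal M},D_{\mathcal M}]$, pick a rational point $x_J\in J$. We can take an endpoint if it belongs to $J$, or else a midpoint, which has small bit-length. Set $\tilde u_J=\widehat u(x_J,\varepsilon/4)$. On bins disjoint from the payoff range, set $\tilde u_J=0$. Let $\tilde u$ be the resulting piecewise-constant function. For any $x\in J\cap[-D_{\mathcal M},D_{\mathcal M}]$, Lipschitzness inside the enclosing $I_i$ gives $|u(x)-\tilde u(x)|\le L\cdot\varepsilon/(2L)+\varepsilon/4$. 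To keep the total below $\varepsilon/2$, I would instead take the mesh $\varepsilon/(4L)$ and the oracle accuracy $\varepsilon/4$. Then $|u-\tilde u|\le\varepsilon/2$ on the whole payoff range. Hence for every policy $\theta$ we have $|V_u^\theta(\mathcal M)-V_{\tilde u}^\theta(\mathcal M)|\le\varepsilon/2$, and taking suprema gives $|V_u^*-V_{\tilde u}^*|\le\varepsilon/2$. The oracle calls take polynomial time, because $x_J$ has bit-length polynomial in $\log N$, $\log D$ and the bit-lengths of the endpoints of $B$, and $1/\eta=4/\varepsilon$.

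\textbf{Step 3: invoke Theorem~\ref{thm:pwc-utility}.} Apply it to $(\mathcal M,B',\tilde u)$ to obtain $\widehat V=V_{\tilde u}^*(\mathcal M)$ exactly, together with an optimal finite-memory policy $\theta$ for $\tilde u$. Then $|\widehat V-V_u^*|\le\varepsilon/2\le\varepsilon$. Moreover, $V_u^\theta\ge V_{\tilde u}^\theta-\varepsilon/2=V_{\tilde u}^*-\varepsilon/2\ge V_u^*-\varepsilon$, so $\theta$ is $\varepsilon$-optimal. The running time is polynomial in $|\mathcal M|$, $n'$, $\mu_{\mathcal M}$ and $q_{B'}$, and therefore in the stated parameters.

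\textbf{Main obstacle.} The delicate step is Step~1: the refinement must respect the original bin structure, because Lipschitz bounds hold only within each $I_i$, and it must keep $q_{B'}$ polynomially bounded. Using a uniform grid with denominator $N$, rather than cutting each $I_i$ relative to its own endpoints, handles both requirements. The remaining work is checking that the bit-length of the $x_J$ stays polynomial, so that the oracle is efficient.
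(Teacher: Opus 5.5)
Your reduction is the paper's proof. You refine $B$ by a uniform grid whose mesh is a unit fraction with denominator polynomial in $\lceil L\rceil$ and $\lceil 1/\varepsilon\rceil$; the paper uses $\tau=1/(4\lceil L\rceil\lceil 1/\varepsilon\rceil)$. You query the oracle once per bin with accuracy about $\varepsilon/4$, which gives a piecewise-constant utility within $\varepsilon/2$ of $u$ on $[-D_{\mathcal M},D_{\mathcal M}]$. You then apply Theorem~\ref{thm:pwc-utility} and use the same $\varepsilon$-optimality chain.

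\textbf{One step fails as written.} You add only the grid points lying inside $[-D_{\mathcal M},D_{\mathcal M}]$. So a bin of $B'$ that meets $[-D_{\mathcal M},D_{\mathcal M}]$ need not be short. Let $K/N$ be the largest grid point and suppose $K/N<D_{\mathcal M}$. The bin just above $K/N$ can then be unbounded, or a long bounded bin inherited from $B$ such as $(K/N,100)$. Hence your claims that every bounded bin has length at most $1/N$ and that the unbounded bins play no role are false. Your recipe for $x_J$ (an endpoint belonging to $J$, else the midpoint) can then return a point far outside $[-D_{\mathcal M},D_{\mathcal M}]$, or no point at all for $(K/N,+\infty)$. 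In that case the bound $|u(x)-u(x_J)|\le L/N$ for $x\in J\cap[-D_{\mathcal M},D_{\mathcal M}]$ fails.

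\textbf{The repair.} What is true is that $J\cap[-D_{\mathcal M},D_{\mathcal M}]$ always has diameter at most $1/N$. So it suffices to choose $x_J\in J\cap[-D_{\mathcal M},D_{\mathcal M}]$, which is exactly what the paper does. The paper additionally extends the grid to $k\tau$ with $|k|\le\lceil D_{\mathcal M}/\tau\rceil+1$, so the grid covers the whole payoff range.

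\textbf{A smaller point on the mesh.} When you switch to the finer mesh, keep it a unit fraction, e.g.\ $1/N$ with $N=\lceil 4L/\varepsilon\rceil\le 4\lceil L\rceil\lceil 1/\varepsilon\rceil$, rather than literally $\varepsilon/(4L)$. The denominator of $\varepsilon/(4L)$ need not be polynomial in $\lceil L\rceil$ and $\lceil 1/\varepsilon\rceil$, and that would break your bound on $q_{B'}$.
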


For both theorems, the running time additionally depends polynomially on the maximum bit-length of the bin endpoints and, as applicable, the values $u_1,\dots,u_n$, $L$, and $\varepsilon$.

\paragraph{High-Level Idea} We now give a high-level overview of the proofs of the two theorems. The full details are presented in the following subsections.
Let $S$ and $d$ be the states and discount factor of MDP $\mathcal M$. For a piecewise-constant utility $u$, the expected utility for a policy $\theta$ decomposes as
\begin{equation}
\label{eq:pwc-exp}
V_u^\theta(\mathcal M)
=
\sum_{i=1}^n
u_i\,
\Pr\nolimits_{\mathcal M}^{\theta}[\DS_d\in I_i].
\end{equation}
Taking the product of $\mathcal M$ with the categorizer automaton $\mathcal A = (\mathcal T, F_1, \dots ,F_n)$ for $(\mu_{\mathcal M}, d, B)$ turns each event
$\DS_d\in I_i$ into a B\"uchi event $\Buchi(S \times F_i)$. The problem therefore becomes the
optimization of a weighted sum of B\"uchi probabilities in a finite MDP, which can be solved using the techniques of~\citet{DBLP:journals/tac/CourcoubetisY98}.

For a piecewise-Lipschitz utility, we refine the given binning and use the
evaluation oracle to construct a piecewise-constant utility $u'$ satisfying
$|u(x)-u'(x)|\leq\varepsilon/2$ for $x \in [-D_\mathcal{M}, D_\mathcal{M}]$. An optimal value and policy for
$u'$ are then $\varepsilon$-optimal for $u$.

\paragraph{Lower Bound}
The algorithms of Theorems~\ref{thm:pwc-utility}
and~\ref{thm:pwl-utility} are pseudo-polynomial: their running times depend on
the numerical values of $\mu_{\mathcal M}$ and $q_B$, rather than only on their
bit-lengths. We complement these upper bounds with a hardness result that already holds for three bins and $q_B=1$.

The \textsc{QSubsetSum} problem asks, given natural numbers
$k_1,\dots,k_N,T$, with $N$ even, whether
$\exists x_1\in\{0,1\}\,\forall x_2\in\{0,1\}\cdots
\exists x_{N-1}\in\{0,1\}\,\forall x_N\in\{0,1\}: \sum_{i=1}^N x_i k_i=T$. The problem is $\mathsf{PSPACE}$-complete
\citep[Lem.~4]{DBLP:journals/tcs/Travers06} and, as observed by
\citet[\S5]{DBLP:conf/icalp/HaaseK15}, can be represented by a
layered MDP with one level per number: the agent chooses whether to take \emph{reward} $k_i$
at odd level $i$, while a fair coin makes the choice at even level $i$. The agent has a policy under which the total reward equals $T$ almost surely exactly when the \textsc{QSubsetSum} instance is positive.

\begin{lemma}\label{lem:hard}
Deciding whether $V_u^*(\mathcal M)\geq 1$ for a piecewise-constant utility $u$
is $\mathsf{PSPACE}$-hard, already for discount factor $d=2$ and a binning $B$ with
three bins and integer endpoints, i.e. $q_B =1$.
\end{lemma}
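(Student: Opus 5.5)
We reduce from \textsc{QSubsetSum}, adapting the layered MDP of \citet{DBLP:conf/icalp/HaaseK15} to discounted sums with $d=2$. The obstacle is that discounting scales each reward by $2^{-(i-1)}$. We compensate by encoding the numbers in binary-scaled rewards spread over consecutive steps, and we choose the binning so that ``total equals $T$'' becomes ``discounted sum lies in the middle singleton bin''. The utility is $1$ on that bin and $0$ on the other two. Then $V_u^*(\mathcal M)\geq 1$ iff some policy puts the discounted sum exactly on the target almost surely. Since only finitely many (polynomially many) histories can occur with positive probability in a finite layered structure, this is iff the agent can force the sum, which is the alternating quantifier semantics.

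\textbf{Construction.} Rewards must be small in bit-length only: they are encoded in binary in $|\mathcal M|$, and the hardness is meant to show that pseudo-polynomial dependence on $\mu_{\mathcal M}$ is needed. So large rewards are allowed. A choice at level $i$ happens at time step $i$, so a reward $k_i$ taken there contributes $k_i/2^{i-1}$. We instead let the reward at level $i$ be $k_i 2^{i-1}$, which is integer and has polynomial bit-length. The contribution is then exactly $x_i k_i$, and the finite prefix has discounted sum $\sum_i x_i k_i$.

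After level $N$ the MDP enters an absorbing sink with reward $0$, so the full discounted sum equals $\sum_i x_i k_i$. We use the binning $B=\{(-\infty,T),\{T\},(T,+\infty)\}$, which has integer endpoint $T$, so $q_B=1$, and three bins. The utility takes values $u_1=u_3=0$ and $u_2=1$. The probabilistic levels use fair coins (probability $1/2$) on two actions or successors, with rewards $0$ or $k_i2^{i-1}$. The construction is polynomial in the bit-length of the instance.

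\textbf{Correctness.} Every play has $\DS_2(\pi)=\sum_i x_i k_i$, where the $x_i$ are read off the branches taken. Hence $V_u^\theta(\mathcal M)=\Pr^\theta[\sum x_ik_i=T]\le 1$.

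For the forward direction, a winning strategy for the existential player in the quantified formula gives a deterministic policy that reaches $T$ on every play, so the value is $1$.

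Conversely, suppose the instance is negative. There are only $2^{N}$ possible $x$-vectors, and each coin outcome has probability $1/2$. By induction over the levels, for any policy (even a randomized one) the probability of failure is at least $2^{-N/2}$ times the minimal mixing weight. A cleaner route is to note that the supremum over policies is attained by deterministic policies, since the model is a finite tree and this is plain backward induction. A deterministic policy then fails with probability at least $2^{-N/2}>0$. So $V_u^*<1$, and indeed $V_u^*\le 1-2^{-N/2}$, which makes the supremum question well posed.

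\textbf{Main obstacle.} The delicate step is matching the discounting exactly. We must make sure the level index equals the time step, that no intermediate transitions with nonzero reward are inserted, and that the rewards $k_i2^{i-1}$ remain polynomial in bit-length. If the MDP needs extra bookkeeping steps, those steps get reward $0$ and the scaling factors shift accordingly.
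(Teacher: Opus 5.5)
Your proposal is correct and follows the paper's proof: the same layered \textsc{QSubsetSum} MDP with rewards $k_i2^{i-1}$ that cancel the discounting, an absorbing zero-reward sink, the binning $\{(-\infty,T),\{T\},(T,+\infty)\}$ with utility $1$ on $\{T\}$, and the observation that only the first $N$ steps matter, so the supremum is attained. The only difference is that you prove the final equivalence directly, whereas the paper cites Haase and Kiefer: deterministic policies suffice by backward induction, and in a negative instance each one fails on some coin sequence of probability $2^{-N/2}$. One small correction: there are exponentially many positive-probability histories, not polynomially many, but your argument only needs finiteness.
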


\begin{proof}[Proof sketch]
Following the discount-balancing construction of
\citet{randour_percentile_2015}, in the layered MDP above set $d=2$ and multiply each reward at level $i$ by $d^{i-1}$. The discounted-sum payoff of any play is then exactly the
total reward in the original MDP. Finally, take
$B=\{(-\infty,T),\{T\},(T,+\infty)\}$ and assign utility $1$ to $\{T\}$ and utility $0$ to the other two bins. Thus, $V_u^*(\mathcal M)=1$ exactly when the \textsc{QSubsetSum} instance is positive. Full details are provided in the
supplementary material.
\end{proof}
We remark that this strongly suggests that there is no algorithm constructing categorizer automata for $(\mu, d, B)$ that runs in polynomial time in the bit-length of alphabet bound~$\mu$.

\subsection{Proof of Theorem~\ref{thm:pwc-utility}}
\label{sec:pwc-utility}
In this subsection we prove Thm.~\ref{thm:pwc-utility}. Fix an instance consisting of an MDP $\mathcal M=(S,\Act,s_{\mathrm{init}},P,r,d)$, a rational
binning $B=\{I_1,\dots,I_n\}$, and a piecewise-constant utility $u$ on \(B\) with value \(u_i\) on bin \(I_i\). 

Let \(\mathcal A=(\mathcal T,F_1,\dots,F_n)\) be the categorizer automaton for \((\mu_{\mathcal M},d,B)\), with \(\mathcal T=(Q,\Sigma_{\mu_{\mathcal M}},q_{\mathrm{init}},\delta)\).

\paragraph{Product with Categorizer Automaton}
The product $\mathcal M\otimes\mathcal A$ is an MDP that records the current states of both the MDP $\mathcal M$ and the categorizer automaton $\mathcal A$. It has state space $S\times Q$, action set $\Act$, initial state $(s_{\mathrm{init}},q_{\mathrm{init}})$, and transition probability $P^{\otimes}((s,q),a,(s',q'))=P(s,a,s')$ if $q'=\delta(q,r(s,a))$, and $0$ otherwise. 

Because the categorizer automaton $\mathcal A$ is deterministic, every play $\pi=s_0a_0s_1a_1\cdots$ of $\mathcal M$ has a unique corresponding play
$\pi^\otimes=(s_0,q_0)a_0(s_1,q_1)a_1\dots $ of $\mathcal M\otimes\mathcal A$, where $q_0=q_{\mathrm{init}}$ and $q_{k+1}=\delta(q_k,r(s_k,a_k))$ for every $k\geq 0$. Conversely, the projection of every play of $\mathcal M\otimes\mathcal A$ on its $S$-component yields a unique play of $\mathcal M$. 

The same correspondence holds between finite histories of $\mathcal M$ and $\mathcal M \otimes \mathcal A$ and extends to their policies: given a policy $\theta$ on $\mathcal M$, define the policy $\theta^\otimes$ on $\mathcal M \otimes \mathcal A$ by $\theta^\otimes(h^\otimes,a)=\theta(h,a)$ for every pair of corresponding histories $h$ and $h^\otimes$. Conversely, every policy on the product determines a policy on $\mathcal M$. Corresponding histories use the same actions and transition probabilities, and hence corresponding measurable events have the same probability.

\paragraph{Reduction} By construction, a play $\pi$ of $\mathcal M$ satisfies \(\DS_d(\pi)\in I_i\) iff the corresponding play \(\pi^{\otimes}\) of $\mathcal M \otimes \mathcal A$ visits \(S \times F_i\) infinitely often. With~\eqref{eq:pwc-exp} this gives, for corresponding \(\theta\) and \(\theta^{\otimes}\), the expected utility $V_u^{\theta}(\mathcal M)$ is equal to the weighted sum $\sum_{i=1}^{n}u_i
  \Pr\nolimits_{\mathcal M\otimes\mathcal A}^{\theta^{\otimes}}
  \left[\Buchi(S\times F_i)\right]$ of B\"uchi probabilities.
When all $u_i$ are nonnegative, maximizing the right-hand side can be done using the following straightforward consequence of~\cite{DBLP:journals/tac/CourcoubetisY98}.
\begin{proposition}
\label{prop:weighted-buchi}
For a finite MDP \(\mathcal N\) with state space $Z$, sets \(Z_1,\dots,Z_k \subseteq Z\), and weights
\(v_1,\dots,v_k \in \Q_{\ge 0}\), the optimal value
\(
  \sup_{\theta \in \Theta_{\mathcal N}}\;\sum_{i=1}^{k}v_i\cdot
     \Pr\nolimits^{\theta}_{\mathcal{N}} [\Buchi(Z_i)]
\) is rational. Moreover, this value and a finite-memory policy attaining it can be computed in time polynomial in \(|\mathcal N|\), \(k\) and the maximum bit-length of the weights.
\end{proposition}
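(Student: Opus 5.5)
The plan is to reduce the weighted sum of B\"uchi probabilities to a single reachability objective via end components, following the standard approach of \citet{DBLP:journals/tac/CourcoubetisY98}. First I will compute the maximal end-component (MEC) decomposition of $\mathcal N$ in polynomial time. The key fact is that under every policy, almost every play eventually stays inside some end component $E$ and visits each of its state--action pairs infinitely often. Conversely, once a play has entered a MEC $C$, the agent can remain in $C$ forever and visit all of its states infinitely often with probability $1$, using a memoryless randomized (or finite-memory deterministic) policy.

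For each MEC $C$, I define its reward $w(C)=\sum_{i: C\cap Z_i\neq\emptyset} v_i$. Because all weights are nonnegative, the best the agent can do after reaching $C$ is to cycle through all of $C$: this satisfies $\Buchi(Z_i)$ with probability $1$ for every $i$ with $Z_i\cap C\neq\emptyset$. No play confined to $C$ can satisfy $\Buchi(Z_i)$ when $Z_i\cap C=\emptyset$. A sub-end-component can only collect a subset of these indices, and this is exactly where nonnegativity is used. So the optimal value should equal
\[
\sup_\theta \sum_C w(C)\Pr\nolimits^\theta[\text{eventually remain in } C].
\]

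To compute this, I add a fresh absorbing target $\top_C$ for each MEC $C$. From every state of $C$ there is a new action leading to $\top_C$, and I maximize the expected terminal reward $w(C)$ collected at the targets. This is a max-weighted-reachability problem, solvable in polynomial time by a linear program. Its optimal value is rational, with bit-length polynomial in $|\mathcal N|$, $k$, and the weight bit-lengths, since $w(C)$ is a sum of at most $k$ weights.

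There are two inequalities to prove. For the upper bound ($\le$), I take any policy $\theta$ and condition on the end component $E$ in which the play eventually stays. That $E$ lies inside some MEC $C$, and the play's payoff is at most $w(C)$. Redirecting to $\top_C$ at the moment of entering the final region then gives a policy in the auxiliary MDP with at least the same value. I expect this to be the main technical obstacle: it requires the limit-behaviour lemma for end components and a careful coupling argument, which is handled by \citet{DBLP:journals/tac/CourcoubetisY98}. For the lower bound ($\ge$), I take an optimal memoryless deterministic policy for the reachability LP. I follow it until it would move to $\top_C$, then switch to the policy that cycles through all of $C$. The resulting policy is finite-memory, with one memory bit for the mode switch, and achieves the LP value.
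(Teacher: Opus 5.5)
Your proposal is correct and is essentially the end-component argument of Courcoubetis and Yannakakis: decompose into MECs, then solve a weighted-reachability LP in which each MEC $C$ is worth $w(C)$, with $v_i\ge 0$ ensuring that cycling through all of $C$ is optimal; the paper simply invokes that result as a black box, observing that no automaton product is needed because each event is already a B\"uchi condition on the states of $\mathcal N$. The one step to tighten is the upper bound, since ``the moment of entering the final region'' is not a stopping time: instead let the auxiliary policy follow $\theta$ and jump to $\top_C$ at the first state lying in some MEC $C$ visited at or after a fixed time $n$; because the play is almost surely confined to its final MEC from some time on, these policies' values converge to $\sum_C w(C)\Pr^\theta[\text{eventually remain in } C]$ as $n\to\infty$, which is at least the value of $\theta$.
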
 %
Indeed, in the corresponding theorem in~\cite[Thm.~5.1, \S~7]{DBLP:journals/tac/CourcoubetisY98} the events $\Buchi(Z_i)$ are given by automata that must first be composed with the MDP $\mathcal N$, and the running time is polynomial in the size of that product. Here, each event is already a B\"uchi condition on the states of $\mathcal N$, so the composition step is not needed and the rest of their proof applies.

If some utility values are negative, we shift all values by a common constant before applying Prop.~\ref{prop:weighted-buchi}. Let $c=\max\{0,-\min_i u_i\}$ and define $\bar u(x)=u(x)+c$. Then $\bar u$ is piecewise-constant on $B$, with value $\bar u_i=u_i+c\geq 0$ on $I_i$. By taking expectations, for every policy $\theta$, $V_{\bar u}^{\theta}(\mathcal M)=V_u^{\theta}(\mathcal M)+c$. Consequently, $u$ and $\bar u$ have the same optimal policies, and $V_{\bar u}^*(\mathcal M)=V_u^*(\mathcal M)+c$.

Applying Prop.~\ref{prop:weighted-buchi} to $\mathcal M\otimes\mathcal A$, with B\"uchi sets $S\times F_1,\dots,S\times F_n$ and nonnegative rational weights $\bar u_1,\dots,\bar u_n$, computes $V_{\bar u}^*(\mathcal M)$ and an optimal finite-memory policy  $\theta^\otimes$ on the product. Subtracting $c$ from the computed value gives $V_u^*(\mathcal M)$. Transferring the policy $\theta^\otimes$ to $\theta$ of $\mathcal M$ yields an
optimal finite-memory policy for $u$: its memory maintains the current state of
$\mathcal A$ together with the memory used by $\theta^\otimes$.
The size of $\mathcal M\otimes\mathcal A$ is polynomial in
$|\mathcal M|$ and $|\mathcal A|$. Thm.~\ref{thm:size} and
Prop.~\ref{prop:weighted-buchi} therefore give the complexity bound
of Thm.~\ref{thm:pwc-utility}.

\subsection{Proof of Theorem~\ref{thm:pwl-utility}}
\label{sec:pwl-utility}
In this subsection, we prove Thm.~\ref{thm:pwl-utility}. Fix an instance
consisting of an MDP $\mathcal M=(S,\Act,s_{\mathrm{init}},P,r,d)$, a rational
binning $B=\{I_1,\dots,I_n\}$, a
rational precision $\varepsilon>0$, and a piecewise-Lipschitz utility $u$ on $B$ with
Lipschitz constant $L>0$, represented by an evaluation oracle $\widehat u$ .

\paragraph{Piecewise-Constant Approximation}
Let $\hat{L}=\lceil L\rceil$, $E=\lceil 1/\varepsilon\rceil$, $\tau=1/(4\hat{L} E)$, and $W=\lceil D_{\mathcal M}/\tau\rceil+1$. Thus, $L\leq\hat{L}$, $1/E\leq\varepsilon$, and $[-D_{\mathcal M},D_{\mathcal M}] \subseteq [-W\tau,W\tau]$.

Construct a binning $B'=\{J_1,\dots,J_r\}$, using the endpoints of $B$ and the grid points $k\tau$, for $k \in \{-W,\dots, W\}$, such that every bin intersecting $[-D_{\mathcal M},D_{\mathcal M}]$ has length at most $\tau$ and every bin $J_i$ is contained in a bin of $B$.

For every bin $J_i$ intersecting $[-D_{\mathcal M},D_{\mathcal M}]$, choose a rational point $z_i\in J_i\cap[-D_{\mathcal M},D_{\mathcal M}]$ whose bit-length is
polynomial in the bit-lengths of the endpoints of $J_i$. Query the evaluation oracle at $z_i$ with precision $\eta=1/(4E)$, and let $u'_i=\widehat u(z_i,\eta)$. Define $u'(x)=u'_i$ for every $x\in J_i$. On bins disjoint from $[-D_{\mathcal M},D_{\mathcal M}]$, define $u'$ to be $0$.

The resulting utility $u'$ is piecewise-constant on $B'$ and satisfies
$|u(x)-u'(x)|\leq\varepsilon/2$ for every
$x\in[-D_{\mathcal M},D_{\mathcal M}]$. Indeed, let $J_i$ be the bin
containing $x$. Since $x,z_i\in J_i$, $J_i$ has length at most
$\tau$, and $J_i$ is contained in a bin of $B$, the Lipschitz condition gives
$|u(x)-u(z_i)|\leq L\tau$. The oracle guarantee gives
$|u(z_i)-u'_i|\leq\eta$. Therefore, $|u(x)-u'(x)|
\leq |u(x)-u(z_i)|+|u(z_i)-u'_i|
\leq L\tau+\eta
\leq 1/(2E)
\leq {\varepsilon}/{2}.$

\paragraph{Reduction and Complexity} Taking expectations, for every policy $\theta$ of $\mathcal M$,
\(V_{u'}^\theta(\mathcal M)-\varepsilon/2 \le V_u^\theta(\mathcal M) \le V_{u'}^\theta(\mathcal M)+\varepsilon/2. \) Thus, the optimal value $V_{u'}^*(\mathcal M)$ satisfies $|V_{u}^*(\mathcal M) -V_{u'}^*(\mathcal M) | \le \varepsilon/2$, and an optimal policy $\theta^*$ for $u'$ is $\varepsilon$-optimal for $u$ (since $V_u^*(\mathcal M)
\leq V_{u'}^{\theta^*}(\mathcal M)+\varepsilon/2
\leq V_u^{\theta^*}(\mathcal M)+\varepsilon$). 
Thm.~\ref{thm:pwc-utility} computes $V_{u'}^*(\mathcal M)$ and such a
policy $\theta^*$.

Since $d\geq 2$, we have $D_{\mathcal M}\leq 2\mu_{\mathcal M}$, and hence $W=\mathcal O(\mu_{\mathcal M}\hat{L} E)$. Therefore, the number $r$ of bins of $B'$ is on the order of $n+\mathcal O(\mu_{\mathcal M}\hat{L} E)$. 
Moreover, since every new endpoint is of the form $k/(4\hat{L} E)$, the largest
denominator $q_{B'}$ is at most $\max\{q_B,4\hat{L} E\}$.

Constructing $u'$ requires $r$ oracle calls, each with precision $\eta=1/(4E)$. By the choice of the query points and the oracle assumption, the total running time of these calls and the bit-lengths of their outputs are polynomial in $r$, the bit-lengths of the endpoints of $B'$, and the numerical value $E$. Together with the bounds on $r$ and $q_{B'}$ above, Thm.~\ref{thm:pwc-utility} gives the running-time bound claimed in Thm.~\ref{thm:pwl-utility}.

\section{Conclusion}\label{sec:discussion}
We introduced categorizer automata and showed how to construct them in time and space linear (rather than exponential) in the number of bins. We then applied this construction to obtain algorithms for synthesizing policies that
maximize the expected utility of the payoff for a broad class of (possibly) discontinuous
utilities. We view this as one instance of a broader use of categorizer automata. 
A natural next step is to synthesize policies in MDPs that optimize \emph{soft} quantitative preferences, expressed through discounted-sum payoffs, subject to \emph{hard} qualitative requirements, expressed by LTL specifications. These two objectives are typically handled using different algorithmic techniques, see~\cite{Puterman94} and~\cite{PMC-book}. A categorizer automaton bridges the mismatch by translating the quantitative payoff into $\omega$-regular conditions. 

Finally, our construction of categorizer automata, like the comparator automata before it, requires the discount factor to be an integer. One potential route to non-integer
discount factors is an approximate categorizer automaton, in the spirit
of~\citet{randour_percentile_2015}, that is required to identify the correct bin only when the payoff is at a distance greater than $\varepsilon$ from every
endpoint. This relaxation introduces only a vanishing utility error when the
utility is continuous at the endpoints. At a discontinuity, however, it cannot
recover the guarantees of Theorems~\ref{thm:pwc-utility} and~\ref{thm:pwl-utility}, and new ideas are needed.

\bibliography{main}

\begin{thebibliography}{22}
\providecommand{\natexlab}[1]{#1}

\bibitem[{Baier and Katoen(2008)}]{PMC-book}
Baier, C.; and Katoen, J. 2008.
\newblock \emph{Principles of model checking}.
\newblock {MIT} Press.

\bibitem[{Bansal, Chatterjee, and Vardi(2021)}]{DBLP:conf/tacas/BansalCV21}
Bansal, S.; Chatterjee, K.; and Vardi, M.~Y. 2021.
\newblock On Satisficing in Quantitative Games.
\newblock In Groote, J.~F.; and Larsen, K.~G., eds., \emph{Tools and Algorithms
  for the Construction and Analysis of Systems {TACAS} 2021}, Lecture Notes in
  Computer Science, 20--37. Springer.

\bibitem[{Bansal, Chaudhuri, and Vardi(2018)}]{DBLP:conf/fossacs/BansalCV18}
Bansal, S.; Chaudhuri, S.; and Vardi, M.~Y. 2018.
\newblock Comparator Automata in Quantitative Verification.
\newblock In \emph{FoSSaCS}, volume 10803 of \emph{Lecture Notes in Computer
  Science}, 420--437. Springer.

\bibitem[{Bansal, Chaudhuri, and Vardi(2022)}]{bansal2022comparator}
Bansal, S.; Chaudhuri, S.; and Vardi, M.~Y. 2022.
\newblock Comparator automata in quantitative verification.
\newblock \emph{Logical Methods in Computer Science}, 18.

\bibitem[{Bansal et~al.(2022)Bansal, Kavraki, Vardi, and
  Wells}]{rational-discount-AAAI}
Bansal, S.; Kavraki, L.; Vardi, M.; and Wells, A. 2022.
\newblock Synthesis from Satisficing and Temporal Goals.
\newblock \emph{Proceedings of the AAAI Conference on Artificial Intelligence},
  36: 9679--9686.

\bibitem[{Boker and Henzinger(2014)}]{BokerH14}
Boker, U.; and Henzinger, T.~A. 2014.
\newblock Exact and Approximate Determinization of Discounted-Sum Automata.
\newblock \emph{Log. Methods Comput. Sci.}, 10(1).

\bibitem[{Bäuerle and Rieder(2014)}]{INFORMS2014}
Bäuerle, N.; and Rieder, U. 2014.
\newblock More Risk-Sensitive Markov Decision Processes.
\newblock \emph{Mathematics of Operations Research}, 39(1): 105--120.

\bibitem[{Courcoubetis and
  Yannakakis(1998)}]{DBLP:journals/tac/CourcoubetisY98}
Courcoubetis, C.; and Yannakakis, M. 1998.
\newblock Markov decision processes and regular events.
\newblock \emph{{IEEE} Trans. Autom. Control.}, 43(10): 1399--1418.

\bibitem[{de~Alfaro, Henzinger, and
  Majumdar(2003)}]{DBLP:conf/icalp/AlfaroHM03}
de~Alfaro, L.; Henzinger, T.~A.; and Majumdar, R. 2003.
\newblock Discounting the Future in Systems Theory.
\newblock In \emph{{ICALP}}, volume 2719 of \emph{Lecture Notes in Computer
  Science}, 1022--1037. Springer.

\bibitem[{Etessami et~al.(2008)Etessami, Kwiatkowska, Vardi, and
  Yannakakis}]{lmcs:990}
Etessami, K.; Kwiatkowska, M.; Vardi, M.~Y.; and Yannakakis, M. 2008.
\newblock Multi-Objective Model Checking of Markov Decision Processes.
\newblock \emph{Logical Methods in Computer Science}, Volume 4, Issue 4: 8.

\bibitem[{Haase and Kiefer(2015)}]{DBLP:conf/icalp/HaaseK15}
Haase, C.; and Kiefer, S. 2015.
\newblock The Odds of Staying on Budget.
\newblock In \emph{{ICALP} {(2)}}, volume 9135 of \emph{Lecture Notes in
  Computer Science}, 234--246. Springer.

\bibitem[{Kadota, Kurano, and Yasuda(1998)}]{General-Utility}
Kadota, Y.; Kurano, M.; and Yasuda, M. 1998.
\newblock On the General Utility of Discounted Markov Decision Processes.
\newblock \emph{International Transactions in Operational Research}, 5(1):
  27--34.

\bibitem[{Mas-Colell et~al.(1995)Mas-Colell, Whinston, Green
  et~al.}]{mas1995microeconomic}
Mas-Colell, A.; Whinston, M.~D.; Green, J.~R.; et~al. 1995.
\newblock \emph{Microeconomic theory}, volume~1.
\newblock Oxford university press New York.

\bibitem[{Puterman(1994)}]{Puterman94}
Puterman, M.~L. 1994.
\newblock \emph{Markov Decision Processes: Discrete Stochastic Dynamic
  Programming}.
\newblock Wiley Series in Probability and Statistics. Wiley.

\bibitem[{Rajasekaran, Bansal, and Vardi(2023)}]{rajasekaran_multi-agent_2023}
Rajasekaran, S.; Bansal, S.; and Vardi, M.~Y. 2023.
\newblock Multi-Agent Systems with Quantitative Satisficing Goals.
\newblock In \emph{{IJCAI}}, 280--288. ijcai.org.

\bibitem[{Randour, Raskin, and Sankur(2015)}]{randour_percentile_2015}
Randour, M.; Raskin, J.-F.; and Sankur, O. 2015.
\newblock Percentile {Queries} in {Multi}-dimensional {Markov} {Decision}
  {Processes}.
\newblock In Kroening, D.; and Păsăreanu, C.~S., eds., \emph{Computer {Aided}
  {Verification}}, 123--139. Cham: Springer International Publishing.
\newblock ISBN 978-3-319-21690-4.

\bibitem[{Simon(1956)}]{simon1956rational}
Simon, H.~A. 1956.
\newblock Rational choice and the structure of the environment.
\newblock \emph{Psychological Review}, 63(2): 129--138.

\bibitem[{Travers(2006)}]{DBLP:journals/tcs/Travers06}
Travers, S.~D. 2006.
\newblock The complexity of membership problems for circuits over sets of
  integers.
\newblock \emph{Theor. Comput. Sci.}, 369(1-3): 211--229.

\bibitem[{White(1993)}]{white_minimizing_1993}
White, D.~J. 1993.
\newblock Minimizing a {Threshold} {Probability} in {Discounted} {Markov}
  {Decision} {Processes}.
\newblock \emph{Journal of Mathematical Analysis and Applications}, 173(2):
  634--646.

\bibitem[{Wu and Lin(1999)}]{wu_minimizing_1999}
Wu, C.; and Lin, Y. 1999.
\newblock Minimizing {Risk} {Models} in {Markov} {Decision} {Processes} with
  {Policies} {Depending} on {Target} {Values}.
\newblock \emph{Journal of Mathematical Analysis and Applications}, 231(1):
  47--67.

\bibitem[{Wu and Xu(2023)}]{DBLP:journals/corr/abs-2311-13589}
Wu, Z.; and Xu, R. 2023.
\newblock Risk-sensitive Markov Decision Process and Learning under General
  Utility Functions.
\newblock \emph{CoRR}, abs/2311.13589.

\bibitem[{Xu and Mannor(2011)}]{xu_probabilistic_2011}
Xu, H.; and Mannor, S. 2011.
\newblock Probabilistic goal {Markov} decision processes.
\newblock In \emph{Proceedings of the {Twenty}-{Second} international joint
  conference on {Artificial} {Intelligence} - {Volume} {Volume} {Three}},
  {IJCAI}'11, 2046--2052. Barcelona, Catalonia, Spain: AAAI Press.
\newblock ISBN 978-1-57735-515-1.

\end{thebibliography}
\appendix

\section{Supplementary Material}
\subsection{Categorizer Automata}
\begin{proof}[Proof of Proposition~\ref{prop:gaps}]
Fix $j$ and $w$, and abbreviate $g^k = g_j(w[1..k])$ and
$\sigma^k = \DS_d(w[k{+}1..]) \in [-D, D]$. Since
$\DS_d(w) = \DS_d(w[1..k]) + d^{-k}\sigma^k$, the closed
form~\eqref{eq:closedform} gives
\begin{equation}
  t_j - \DS_d(w) \;=\; d^{-k}(g^k - \sigma^k)
  \qquad \text{for every } k \ge 0.
  \label{eq:gapwitness}
\end{equation}
If $g^k < -D$ for some $k$, then $g^k - \sigma^k < 0$ because
$\sigma^k \ge -D$, so $\DS_d(w) > t_j$ by~\eqref{eq:gapwitness};
symmetrically, $g^k > D$ for some $k$ implies $\DS_d(w) < t_j$. If
$g^k \in [-D, D]$ for all $k$, then $|g^k - \sigma^k| \le 2D$, so
$|t_j - \DS_d(w)| \le 2D\,d^{-k}$ for all $k$, and the right-hand side
tends to $0$ as $k$ tends to infinity because $d > 1$; hence
$\DS_d(w) = t_j$.
 
The three hypotheses above say that $t_j$ is resolved to $\bot$ after
some prefix of $w$, resolved to $\top$ after some prefix of $w$, and
active after every prefix of $w$, respectively. They are exhaustive,
since $g^k \notin [-D, D]$ means $g^k < -D$ or $g^k > D$, and pairwise
disjoint: the third excludes the other two by definition, and the first
two exclude each other because their conclusions $\DS_d(w) > t_j$ and
$\DS_d(w) < t_j$ do. Since the three conclusions are likewise exhaustive and pairwise disjoint, each implication is in fact an equivalence.
\end{proof}

\begin{proof}[Proof of Proposition~\ref{prop:integer-d}]
\textsf{P1}. We claim that $g_j(u) \in \frac{1}{q_j}\mathbb{Z}$ for every
$u \in \Sigma_\mu^*$. Indeed, by~\eqref{eq:closedform},
\[
  g_j(u) \;=\; \frac{d^{|u|} p_j}{q_j}
               - \sum_{i=1}^{|u|} u[i]\, d^{|u| - i + 1},
\]
where the sum is an integer because $d$ is an integer and
$|u| - i + 1 \ge 1$ for $1 \le i \le |u|$. If $t_j$ is active after $u$
then moreover $g_j(u) \in [-D, D]$, so $g_j(u) \in G_j$.

\noindent \textsf{P2}. Since $D - \mu = \mu/(d-1)$, the bound $D$ is the gap update at the integer bound $\mu$:
\begin{equation}
  d\,(D - \mu) = D
  \qquad\text{and}\qquad
  d\,(-D + \mu) = -D .
  \label{eq:fixpoint}
\end{equation}
Let $a \in \Sigma_\mu$, so $-\mu \le a \le \mu$. If $t_j$ is resolved to
$\bot$ after $u$, i.e.\ $g_j(u) < -D$, then $g_j(u) - a < -D + \mu$, and
multiplying by $d > 0$ and using~\eqref{eq:fixpoint} gives
\[
  g_j(u \cdot a) \;=\; d\,(g_j(u) - a)
                 \;<\; d\,(-D + \mu) \;=\; -D,
\]
so $t_j$ is resolved to $\bot$ after $u \cdot a$. Symmetrically, if
$g_j(u) > D$ then $g_j(u) - a > D - \mu$ and
$g_j(u \cdot a) > d\,(D - \mu) = D$, so $t_j$ remains resolved
to $\top$.
\end{proof}

\paragraph{Formal Definition of $q_\text{init}$}

Recall that by~\eqref{eq:gaps} we have $g_i(\varepsilon) = t_i$. Let
$A_{\text{init}} = \{i \mid t_i \in [-D, D]\}$ and $\ell_{\mathrm{init}}=\max\{i\mid t_i<-D\},$ where we use the convention $\max\emptyset=0$. Accordingly,
\[
  q_{\mathrm{init}} =
  \begin{cases}
    (j, t_j, 0),\ j = \min A_{\text{init}}
      & \text{if } |A_{\text{init}}| \ge 2, \\
    (j, t_j)
      & \text{if } A_{\text{init}} = \{j\}, \\
    \ell_{\text{init}}
      & \text{if } A_{\text{init}} = \emptyset .
  \end{cases}
\]
\paragraph{Formal Definition of $\delta$}
Let $s \in Q$ be a state and let $r \in \Sigma_\mu$ be an input symbol. The transition $\delta(s,r)$ is defined as follows:
\begin{itemize}
    \item $\mathbf{s \in Q_a}$  Let $s=(j,g,k)$. We call $s$ \emph{consistent} if
$g-d^k t_j\in\mathbb Z$. If $s$ is not consistent, set
$\delta(s,r)=s$.

Every state in $Q_a$ reachable from $q_{\mathrm{init}}$ is consistent:
if it is reached after a word $u$ of length $k$, then by~\eqref{eq:closedform},
\[
g-d^k t_j=-d^k\DS_d(u)\in\mathbb Z.
\]
Hence, the self-loops added above do not affect any run from
$q_{\mathrm{init}}$.

Suppose now that $s$ is consistent. First reconstruct
$\widehat g_h=g+d^k(t_h-t_j)$ for $1\leq h\leq m$, as in P1 of
Lemma~\ref{lem:facts}, and then apply update~\eqref{eq:gaps} to obtain
$\widehat g'_h=d(\widehat g_h-r)$. Let
\[
  A' = \{ i \mid \widehat{g}_i' \in [-D, D] \},
  \quad
  \ell' = \max \{ i \mid \widehat{g}_i' < -D \} .
\]
The successor $s' = \delta(s, r)$ is determined by $|A'|$:
\begin{itemize}
\item if $|A'| \ge 2$, then
  $s' = (j', \widehat{g}_{j'}', k{+}1)$ with
  $j' = \min A'$;
\item if $A' = \{h\}$, then
  $s' = (h, \widehat{g}_h')$;
\item if $A' = \emptyset$, then
  $s' = \ell'$.
\end{itemize}
\item  $\mathbf{s \in Q_b}$ Let $s =(j, g)$, let $g' = d\,(g - r)$ as
in~\eqref{eq:gaps}, and set
\[
  \delta(s, r) =
  \begin{cases}
    (j, g') & \text{if } g' \in [-D, D],\\
    j       & \text{if } g' < -D,\\
    j - 1   & \text{if } g' > D.
  \end{cases}
\]
\item $\mathbf{s \in Q_c}$  $\delta(s, r) = s$.
\end{itemize}

\subsection{Lower Bound}

\begin{proof}[Proof of Lemma~\ref{lem:hard}]
We give the details of the reduction from \textsc{QSubsetSum} described in the proof sketch. Fix an instance $k_1,\dots,k_N,T$, where $N$ is even. We construct
the MDP $\mathcal M$ shown in Figure~\ref{fig:hardness} and set its discount
factor to $d=2$. For every odd $i$, the MDP has one state $c_i$, while for
every even $i$, it has two states $c_i^0$ and $c_i^1$. 
It also has an absorbing state $s_{\mathrm{sink}}$, and its initial state is
$c_1$. Thus, $\mathcal M$ has $3N/2+1$ states.

At an odd-level state $c_i$, the actions $\mathsf{take}$ and $\mathsf{skip}$
are enabled. Under either action, the next state is $c_{i+1}^0$ or
$c_{i+1}^1$, each with probability $1/2$. The rewards of these actions are
\[
r(c_i,\mathsf{take})=k_i d^{i-1}
\qquad\text{and}\qquad
r(c_i,\mathsf{skip})=0.
\]
At an even-level state $c_i^b$, where $b\in\{0,1\}$, exactly one action is
enabled, and its reward is $b\,k_i d^{i-1}$. For $i<N$, this action leads with
probability one to $c_{i+1}$; from level $N$, it leads to
$s_{\mathrm{sink}}$. The only action enabled at $s_{\mathrm{sink}}$ has reward
$0$ and returns to $s_{\mathrm{sink}}$.

For a play $\pi$, let $x_i(\pi)\in\{0,1\}$ indicate whether $k_i$ is taken.
At an odd level, this is determined by whether the policy chooses
$\mathsf{take}$ or $\mathsf{skip}$. At an even level, it is determined by
whether the play enters $c_i^1$ or $c_i^0$. The reward associated with level
$i$ is collected at step $i-1$. Therefore,
\[
\DS_d(\pi)
 =\sum_{i=1}^{N}
   \frac{x_i(\pi)k_i d^{i-1}}{d^{i-1}}
 =\sum_{i=1}^{N}x_i(\pi)k_i.
\]
Thus, the scaling by $d^{i-1}$ exactly cancels the discounting. The rewards
may be exponentially large in value, but multiplying $k_i$ by
$2^{i-1}$ increases its binary encoding length by only $i-1$. Hence,
$\mathcal M$ can be constructed in time polynomial in the size of the
\textsc{QSubsetSum} instance.

Consider the three-bin binning $B =\{(-\infty,T),\{T\},(T,+\infty)\}$~\footnote{Binning $B=\{(-\infty,T-1],(T-1, T+1),[T+1,+\infty)\}$ also works as the discounted-sum payoff is always an integer.}
and the piecewise-constant utility $u$ that is $1$ on $\{T\}$ and $0$ on the
other two bins. All endpoints are integers, so $q_B=1$. Viewing each $x_i$ as
a random variable on plays, for every policy $\theta$ we have
\[
V_u^\theta(\mathcal M)
 =\Pr_{\mathcal M}^{\theta}
   \left[\sum_{i=1}^{N}x_i k_i=T\right].
\]
Since $u$ takes values in $\{0,1\}$, we have $V_u^*(\mathcal M)\leq 1$.
Moreover, only the first $N$ steps affect the payoff, so an optimal policy
exists. It follows that $V_u^*(\mathcal M)\geq 1$ exactly when some policy
makes the sum equal to $T$ almost surely. By~\cite{DBLP:conf/icalp/HaaseK15} such a policy exists exactly when the \textsc{QSubsetSum} instance is positive.

The reduction is polynomial and uses the fixed discount factor $d=2$, three
bins, and integer endpoints. The claimed $\mathsf{PSPACE}$-hardness follows.
\end{proof}

\begin{figure}[t]
\centering
\begin{tikzpicture}[
  >={Stealth[length=1.8mm]},
  shorten >=1pt,
  st/.style={
    circle,
    draw,
    minimum size=7mm,
    inner sep=0pt,
    font=\small
  },
  sink/.style={
    circle,
    draw,
    minimum size=9mm,
    inner sep=1pt,
    font=\small
  },
  rw/.style={font=\scriptsize,inner sep=1.5pt},
  pb/.style={font=\scriptsize,inner sep=1.5pt},
  lv/.style={font=\scriptsize\itshape,inner sep=1.5pt},
  pr/.style={
    circle,
    fill=black,
    minimum size=1.8mm,
    inner sep=0pt
  }
]
  \node[st]   (c1)   at (0,0)        {$c_1$};
  \node[pr]   (b1)   at (1.25,0)     {};
  \node[st]   (u2)   at (2.55,0.8)   {$c_2^1$};
  \node[st]   (d2)   at (2.55,-0.8)  {$c_2^0$};
  \node[st]   (c3)   at (4.05,0)     {$c_3$};
  \node[pr]   (b3)   at (5.55,0)     {};
  \node       (dots) at (6.3,0)      {$\cdots$};
  \node[sink] (end)  at (7.2,0)      {$s_{\mathrm{sink}}$};

  \draw[->]
    (c1) to[bend left=25]
    node[rw,above]{$\mathsf{take}\colon k_1$} (b1);
  \draw[->]
    (c1) to[bend right=25]
    node[rw,below]{$\mathsf{skip}\colon 0$} (b1);

  \draw[->]
    (b1) -- node[pb,above,pos=0.55]{$\tfrac12$} (u2);
  \draw[->]
    (b1) -- node[pb,below,pos=0.55]{$\tfrac12$} (d2);

  \draw[->]
    (u2) -- node[rw,above,pos=0.55]{$k_2d$} (c3);
  \draw[->]
    (d2) -- node[rw,below,pos=0.55]{$0$} (c3);

  \draw[->]
    (c3) to[bend left=25]
    node[rw,above]{$\mathsf{take}\colon k_3d^2$} (b3);
  \draw[->]
    (c3) to[bend right=25]
    node[rw,below]{$\mathsf{skip}\colon 0$} (b3);

  \draw[->] (b3) -- (dots);
  \draw[->] (dots) -- (end);
  \draw[->]
    (end) to[out=55,in=125,looseness=6]
    node[rw,above]{$0$} (end);

  \node[lv] at (0,-1.6)    {level $1$};
  \node[lv] at (2.55,-1.6) {level $2$};
  \node[lv] at (4.05,-1.6) {level $3$};
  \node[lv] at (7.2,-1.6)  {sink};
\end{tikzpicture}
\caption{The layered MDP $\mathcal M$ used in the lower bound. At odd
levels, edge labels give the action and its reward. At even levels and at
$s_{\mathrm{sink}}$, only one action is enabled, so only its reward is shown.
Filled dots denote fair probabilistic branching. The repeated levels after
level $3$ are omitted.}
\label{fig:hardness}
\end{figure}
\end{document}